\theoremstyle{plain}
\newtheorem{theorem}{Theorem}[section]
\newtheorem{lemma}[theorem]{Lemma}
\theoremstyle{definition}
\theoremstyle{remark}
\newcommand{\norm}[1]{{\Vert #1 \Vert }}
\newcommand{\E}{\mathbb E} 
\newcommand{\D}{\mathcal D}
\newcommand{\C}{\mathcal C}
\newcommand{\R}{\mathbb R}
\newcommand{\G}{\mathcal G}
\newcommand{\bg}{\nabla_t}
\newcommand{\Loss}{\mathcal L} 
\newcommand{\define}{\coloneqq}
\newcommand{\nabtil}{\widetilde{\nabla}}
\newcommand{\set}[1]{{\{ #1 \}}}
\newcommand{\conv}{\textrm{conv}}
\newcommand{\diag}{\textrm{diag}}
\newcommand{\topk}{\textrm{top}_k}
\newcommand{\rank}{\text{rank}}
\DeclareMathOperator*{\argmax}{arg\,max}
\DeclareMathOperator*{\argmin}{arg\,min}
\icmltitlerunning{Compression-aware Training of Neural Networks using Frank-Wolfe}
\begin{document}

\twocolumn[
\icmltitle{Compression-aware Training of Neural Networks using Frank-Wolfe}



\icmlsetsymbol{equal}{*}

\begin{icmlauthorlist}
\icmlauthor{\centering \textbf{Max Zimmer$^1$, Christoph Spiegel$^1$ \& Sebastian Pokutta$^{1,2}$}
\\
\small{$^1$Department for AI in Society, Science, and Technology, Zuse Institute Berlin, Germany\\
$^2$Institute of Mathematics, Technische Universität Berlin, Germany}\\
\texttt{\{zimmer,spiegel,pokutta\}@zib.de}}{}
\end{icmlauthorlist}

\icmlkeywords{Machine Learning, ICML}

\vskip 0.3in
]



\printAffiliationsAndNotice{}  

\begin{abstract}
Many existing Neural Network pruning approaches rely on either retraining or inducing a strong bias in order to converge to a sparse solution throughout training. A third paradigm, \lq compression-aware\rq{} training, aims to obtain state-of-the-art dense models that are robust to a wide range of compression ratios using a single dense training run while also avoiding retraining. We propose a framework centered around a versatile family of norm constraints and the \emph{Stochastic Frank-Wolfe} (SFW) algorithm that encourage convergence to well-performing solutions while inducing robustness towards convolutional filter pruning and low-rank matrix decomposition. Our method is able to outperform existing compression-aware approaches and, in the case of low-rank matrix decomposition, it also requires significantly less computational resources than approaches based on nuclear-norm regularization. Our findings indicate that dynamically adjusting the learning rate of SFW, as suggested by \citet{Pokutta2020}, is crucial for convergence and robustness of SFW-trained models and we establish a theoretical foundation for that practice. 
\end{abstract}

\section{Introduction}\label{sec:introduction}
The astonishing success of Deep Neural Networks relies heavily on over-parameterized architectures \citep{Zhang2016} containing up to several billion parameters. Consequently, modern networks require large amounts of storage and increasingly long, computationally intensive training and inference times, entailing tremendous financial and environmental costs \citep{Strubell2019}. To address this, a large body of work focuses on compressing networks, resulting in \emph{sparse models} that require only a fraction of memory or floating-point operations while being as performant as their \emph{dense} counterparts. Recent techniques include the \emph{pruning} of individual parameters \citep{LeCun1989, Hassibi1992, Han2015, Gale2019, Lin2020, Blalock2020} or group entities such as convolutional filters and entire neurons \citep{Li2016, Alvarez2016, Liu2018, Yuan2021}, the utilization of low-bit representations of networks \emph{(quantization)} \citep{Wang2018, Kim2020} as well as classical matrix- or tensor-decompositions \citep{Zhang2015, Tai2015, Xu2020, Liebenwein2021a} in order to reduce the number of parameters.

While there is evidence of pruning being beneficial for the ability of a model to generalize \citep{Blalock2020, Hoefler2021}, a higher degree of sparsification will typically lead to a degradation in the predictive power of the network. To reduce this impact, two main paradigms have emerged: \emph{pruning after training}, most prominently exemplified by Iterative Magnitude Pruning (IMP) \citep{Han2015}, forms a class of algorithms characterized by a three-stage pipeline of regular (sparsity-agnostic) training followed by prune-retrain cycles that are either performed once (\emph{One-Shot}) or iteratively. The need for retraining to recover pruning-induced losses is often considered to be an inherent disadvantage and computationally impractical \citep{Liu2020a, Ding2019, Wortsman2019, Lin2020, Zimmer2021}. In that vein, \emph{pruning during training} or \emph{regularization} approaches avoid retraining by inducing strong inductive biases to converge to a sparse model at the end of training \citep{Zhu2017, CarreiraPerpinan2018, Kusupati2020, Liu2020a}. However, such procedures incorporate the goal sparsity into training, requiring to completely train a model per sparsity level, while IMP needs just one pretrained model to generate the entire accuracy-vs.-sparsity frontier, albeit at the price of retraining.

A third paradigm, which is the focus of this work, naturally emerges when no retraining is allowed and training several times to generate the accuracy-vs.-sparsity tradeoff frontier is prohibitive. Ideally, the optimization procedure is \enquote{compression-aware} \citep{Alvarez2017, Peste2022} or \enquote{pruning-aware} \citep{Miao2022}, allowing to train once and then being able to compress to various degrees while keeping most of the performance without retraining (termed \emph{pruning stability}). Compression-aware training procedures are expected to yield state-of-the-art dense models which are robust to pruning without its (regularization) hyperparameters being selected for a particular level of compression. While many such methods employ (potentially modified) SGD-variants to discriminate between seemingly \lq important\rq{} and \lq unimportant\rq{} parameters, cf. GSM \citep{Ding2019}, LC \citep{CarreiraPerpinan2018}, ABFP \citep{Ding2018} or Polarization \citep{Zhuang2020}, actively encouraging the former to grow and penalizing the latter, an interesting line of research considers the usage of specific optimizers other than SGD. An optimization approach that is particularly suited is the first-order and projection-free \textit{Stochastic Frank-Wolfe} (SFW) algorithm \citep{Frank1956, Berrada2018, Pokutta2020, Tsiligkaridis2020, Miao2022}. While being used throughout various domains of Machine Learning for its highly structured, sparsity-enhancing update directions \citep{LacosteJulien2013, Zeng2014, Carderera2021, Braun2022}, the algorithm has only recently been considered for promoting sparsity in Neural Network architectures.

Addressing the issue of compression-aware training, we propose leveraging the SFW algorithm in conjunction with a family of norm constraints that actively encourage robustness to convolutional filter pruning and low-rank matrix decomposition. Our approach, using the group-$k$-support norm and variants thereof \citep{Argyriou2012, Rao2017, McDonald2016a}, is able to train state-of-the-art image-classification and semantic-segmentation architectures on large datasets to high accuracy while biasing the network towards compression-robustness. Similarly motivated by the work of \citet{Pokutta2020} and concurrent to our work, \citet{Miao2022} showed the effectiveness of $k$-sparse constraints, focusing solely on unstructured weight pruning. Our approach generalizes the unstructured pruning case and further mitigates existing convergence and hyperparameter stability issues. To the best of our knowledge, our work is the first to apply SFW for structured pruning tasks. In analyzing the techniques introduced by \citet{Pokutta2020}, we find that the \emph{gradient rescaling} of the learning rate is important for obtaining high performing and pruning stable results. We lay the theoretical foundation for this practice by proving the convergence of SFW with gradient rescaling in the non-convex stochastic case, extending results of \citet{Reddi2016}.

\paragraph{Contributions.} The major contributions of our work can be summarized as follows:
\begin{enumerate}
    \item We propose a constrained optimization framework centered around a versatile family of norm constraints, which, together with the SFW algorithm, can result in well-performing models that are robust towards convolutional filter pruning as well as low-rank matrix decomposition. We empirically show on image-classification and semantic-segmentation tasks that the proposed method is able to perform on par with or better than existing approaches. 
    \item We show that, in the case of low-rank decomposition, our approach can require significantly less computational resources than nuclear-norm regularization based approaches.
    \item As a special case, our derivation includes a setting suitable for unstructured pruning. We show that our approach enjoys favorable properties when compared to the existing $k$-sparse approach \citep{Pokutta2020, Miao2022}.
    \item We empirically show that the robustness of SFW-trained Neural Networks can largely be attributed to the usage of the \emph{gradient rescaling} of the learning rate, which increases the batch gradient norm and effective learning rate throughout training, even though the train loss constantly decreases. To justify the usage of gradient rescaling theoretically, we prove the convergence of SFW with batch gradient dependent step size in the non-convex setting. 
\end{enumerate} 

Compression-aware training is a promising research direction for training models to state-of-the-art performance while encouraging stability to pruning. We believe that our work is an important building block in the design of structured training algorithms. One strength is the fact that the proposed methods cover a wide range of compression domains, i.e., structured pruning, matrix decomposition as well as unstructured pruning. Our results show the suitability of the SFW algorithm and highlight the importance of the learning rate rescaling, which we justify theoretically in the hope of enabling further research. 

\paragraph{Related Work.}

The \emph{Frank-Wolfe} (FW) \citep{Frank1956} or \emph{conditional gradient} \citep{Levitin1966} algorithm has been studied extensively in the convex setting, enjoying popularity throughout various domains of Machine Learning for being able to efficiently deal with complex structural requirements \citep[e.g.][]{LacosteJulien2013, Zeng2014, Frandi2015, Jaggi2013, Negiar2020}. \citet{LacosteJulien2016} extended the convergence theory of FW to the non-convex setting, while \citet{Hazan2016} and \citet{Reddi2016} provide convergence rates for the stochastic variant of the algorithm (SFW). Several different accelerated variants have been proposed, including variance reduction methods \citep{Hazan2016, Yurtsever2019, Shen2019}, adaptive gradients \citep{Combettes2020} and momentum \citep{Mokhtari2018, Chen2018}. For a survey of conditional gradient methods we refer to \citet{Braun2022}.

SFW and constrained optimization have also received a surge of interest in the context of training Neural Networks: \citet{Ravi2018} advocate for the usage of parameter constraints in Deep Learning, \citet{Xie2019} train shallow networks using SFW, \citet{Berrada2018} design a variant specifically for Neural Networks, \citet{Tsiligkaridis2020} employ SFW for adversarial training and \citet{Pokutta2020} show that SFW can reach state-of-the-art performance on benchmark image-classification tasks. While there is a rich literature on classical FW being applied to sparsity problems in Machine Learning, only few have considered exploiting the structure-enhancing properties of (stochastic) FW in the field of Deep Learning. \citet{Grigas2019} remove neurons from three layer convolutional architectures. \citet{Pokutta2020} propose to constrain the parameters to lie within a $k$-sparse polytope, resulting in a large fraction of the parameters having small magnitude. \citet{Miao2022} leverage this idea in the context of unstructured magnitude pruning with a focus on pruning-aware training, being a compression setting of training 'once-for-all' sparsities \citep{Cai2020}. For a detailed account of different sparsification approaches we refer to the survey of \citet{Hoefler2021}.

\paragraph{Outline.}
We begin by introducing the problem setting and the SFW algorithm. \cref{sec:methodology} contains a precise description of the proposed approach and \cref{sec:experimentalresults} is devoted to experimentally comparing it to existing approaches. Further, \cref{subsec:ablation-lr} contains an analysis of the two learning rate rescaling mechanisms and the convergence theorem for gradient rescaling. Finally, we conclude and discuss the findings of our work in \cref{sec:discussion}.

\section{Preliminaries} \label{sec:preliminaries}
 
For $x \in \R^n$, we denote the $i$-th coordinate of $x$ by $[x]_i$. The diagonal matrix with $x$ on its diagonal is denoted by $\diag(x) \in \R^{n,n}$. For $p \in [1, \infty]$, the $L_p$-ball of radius $\tau$ is denoted by $B_p(\tau)$. $\norm{x}_0$ denotes the number of non-zero components of $x\in \R^n$. For any compact convex set $\C \subseteq \R^n$, let us further denote the $L_2$-diameter of $\C$ by $\D(\C) = \max_{x,y \in \C} \norm{x-y}_2$. As usual, we denote the gradient of a function $L$ at $\theta$ by $\nabla L(\theta)$ and the batch gradient estimator by $\nabtil \Loss(\theta)$. For the sake of convenience, we abuse notation and apply univariate functions to vectors in an elementwise fashion, e.g., $|x|$ denotes the vector $|x| \coloneqq (|x_1| \ldots, |x_n|)$. If not indicated otherwise, we treat a tensor $x$ of a network as a vector $x \in \R^n$.
\paragraph{Constrained optimization using Stochastic Frank-Wolfe.}
We aim at optimizing the parameters $\theta$ of a Neural Network while enforcing structure-inducing constraints by considering the constrained finite-sum optimization problem
	\begin{equation} \label{eq:finit-sum-problem}
		\min_{\theta \in \mathcal{C}} L(\theta) = \min_{\theta \in \mathcal{C}} \frac{1}{m} \sum_{i=1}^m \ell_i(\theta),
	\end{equation}
where the per-sample loss functions $\ell_i$ are differentiable in $\theta$ and $\C$ is a compact, convex set. When using SGD, imposing hard constraints requires a potentially costly projection back to $\C$ to ensure feasiblity of the iterates \citep{Combettes2021}. An alternative is the \emph{Stochastic Frank-Wolfe} (SFW) algorithm \citep{Frank1956, Berrada2018, Pokutta2020}, being projection-free and perfectly suited for yielding solutions with structural properties. To ensure feasibility of the iterates, SFW does not use the gradient direction for its updates but rather chooses a boundary point or vertex of $\C$ that is best aligned with the descent direction. In each iteration $t$, SFW calls a \textit{linear minimization oracle} (LMO) on the stochastic batch gradient $\bg = \nabtil \Loss(\theta_t)$ to solve 
\begin{equation}\label{eq:LMO}
 v_t = \argmin_{v \in \C} \langle v, \bg\rangle,
\end{equation}
which is then used as the direction to update the parameters using the convex combination
\begin{equation}\label{eq:FWUpdate}
\theta_{t+1} \leftarrow (1-\eta_t)\theta_t + \eta_t v_t,
\end{equation}
where $\eta_t \in [0,1]$ is a suitable learning rate. If the initial parameters $\theta_0$ are ensured to lie in the convex set $\C$, then the convex update rule ensures feasibility of the parameters in each iteration. Solving \cref{eq:LMO} is often much cheaper than performing a projection step \citep{Jaggi2013, Combettes2021}, in many cases even admitting a closed-form solution. If $\C$ is given by the convex hull of (possibly infinitely many) vertices, a so-called \emph{atomic domain}, then the solution to \cref{eq:LMO} is attained at one of these vertices \citep{Jaggi2013}.

\paragraph{Inducing structure through the feasible region.}
Apart from constraining the parameters of a network to satisfy a certain norm constraint, e.g.\ a bounded euclidean norm as in the case of weight decay, the unique update rule \cref{eq:FWUpdate} of the SFW algorithm can be used to induce structure through the feasible region. Not only can a feasible region where the $v_t$ are highly structured be beneficial to generalization \citep{Ravi2018, Pokutta2020}, but further induce desirable properties such as sparsity to the network itself.

A recent example is the \emph{$k$-sparse polytope} introduced by \citet{Pokutta2020} as a generalization of the $L_1$-ball $B_1$. The $k$-sparse polytope $\C = P_k(\tau)$ is defined as the convex hull of all vectors $v \in \set{0, \pm \tau}^n$ with at most $k$ non-zero entries, which per design, form the solution set to \cref{eq:LMO}. For small $k$, $v_t$ exhibits a high degree of sparsity and by \cref{eq:FWUpdate} only $k$ parameters are activated while all remaining parameters are discounted strongly, encouraging convergence towards sparse solutions \citep{Pokutta2020, Miao2022}.

\section{Methodology: Compression-aware Training}\label{sec:methodology}
In the following, we propose leveraging a suitable family of norm constraints which arise naturally from $L_2$-regularization with sparsification requirements. In general, our goal is to choose the constraints to result in sparse update directions when applying the update rule of \cref{eq:FWUpdate}, discriminating between predefined groups of parameters resulting in a decay on seemingly \lq unimportant\rq{} groups of parameters (e.g.\ filters or neurons) while allowing others to grow. Similarly to \citet{Pokutta2020} and \citet{Miao2022}, we control the degree of sparsification with a tunable hyperparameter $k$ such that the update vectors $v_t$ are $k$-sparse, i.e., non-zero at at most $k$ entries. However, existing approach are limited to the sparsification of Neural Networks on an individual-weight basis (i.e.\ unstructured pruning) and may lead to hyperparameter and convergence instability.

We aim at constructing constraints for the structured sparsification case, while also including the unstructured case. In addition, we ensure that (similar to classical SGD), the individual parameters receive updates corresponding to the magnitude of the gradient, enabling better convergence independent of the choice of $k$.

\subsection{Inducing group sparsity to Neural Networks}
Given a disjoint partition of the network's parameters into groups $G \in \G$, we define the \emph{group-$k$-support norm} \citep{Rao2017} ball of radius $\tau$ as
\begin{equation}\label{eq:groupksupportnormdef}
 \C^{\G}_k(\tau) = \conv\set{v\ |\ \norm{v}_{0, \G} \leq k, \norm{v}_2 \leq \tau},
\end{equation}
where $\norm{v}_{0, \G}$ is the smallest number of groups that are needed to cover the support of $v$. In other words, the vertex set of $\C^{\G}_k(\tau)$ is given by all (vectorized) parameters for which the euclidean norm is bounded by $\tau$ and where at most $k$ groups contain non-zero entries. Here, the definition of a set of groups $\G$ is left abstract, as it could be any disjoint partition of the parameters, that is each parameter $w$ must lie in exactly one group $w \in G \in \G$.

Choosing $\G$ as the set of all filters of the $l$-th convolutional layer, we can now state the solution to \cref{eq:LMO} as follows. For a filter $G \in \G$ let $\norm{x}_G$ be the $L_2$-norm of $x \in \R^n$ when only considering elements of $G$. Given the batch gradient of the $l$-th convolutional layer $\bg^l$, let $G_1, \ldots, G_k$ be the $k$ groups or filters with the largest gradient $L_2$-norm $\norm{\bg^l}_{G_i}$ and let $H = \bigcup_{i=1}^k G_i$. The solution to \cref{eq:LMO} is then given by
 \begin{equation*}
 [v_t]_i = \begin{cases} -\tau [\bg^l]_i/\norm{\bg^l}_{H}  &\mbox{if } i \in H, \\ 
	0 & \mbox{otherwise}. \end{cases}
\end{equation*}
A proof can be found in \citet{Rao2017}. Originally, the norm was motivated by the group lasso \citep{Yuan2006}, which is common in the statistics and classical machine learning literature. To the best of our knowledge, these constraints have not been previously applied to Deep Neural Networks. SFW applied to $\C^{\G}_k(\tau)$ updates the $k$ filters whose (stochastic) gradient entries correspond to those of fastest loss decrease while accounting for the distribution of magnitude among them instead of using the same magnitude for all parameters. Due to the convex update rule, the remaining filters are decayed, eventually resulting in few of them not being close to zero and thus making the trained network robust towards filter pruning. \cref{fig:pruneddistance} shows how different values of $k$ as a fraction of the overall number of filters influence the relative distance to the pruned model, indicating that $k$ allows controlling the degree of robustness towards sparsification.

\begin{figure}
\centering
  \includegraphics[width=0.7\linewidth]{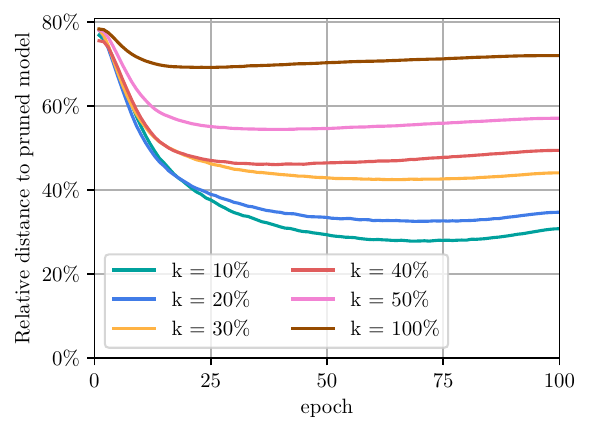}
  \caption{ResNet-18 on CIFAR-10: Relative distance to filter pruned model corresponding to 70\% sparsity when training with the proposed approach and varying $k$.}
  \label{fig:pruneddistance}
\end{figure}%

\paragraph{Unstructured sparsity as a special case}
The proposed approach is easily extendable to the pruning of other groups, such as neurons in linear layers. A special case arises when each weight of the network has its own group, naturally extending the above rationale to the unstructured pruning case. In that case, one recovers the \emph{$k$-support norm} as proposed by \citet{Argyriou2012}, which is a suitable candidate for comparison to the existing $k$-sparse polytope approach leveraged by \citet{Miao2022}.

The $k$-sparse approach approach suffers from two drawbacks, that are mitigated by our proposed method:
\begin{enumerate}
	\item All activated parameters will receive an update of same magnitude, namely $\tau$. This hinders convergence, especially when $k$ is larger. Consider for example the worst-case scenario in which $k$ equals the number of parameters $n$, then every single parameter of the network will receive an update of magnitude $\tau$, essentially losing the entire information of the gradient apart from the entrywise sign. On the contrary, the $k$-support norm with $k=n$ will lead to optimization over the $L_2$-ball, yielding the default and best converging case of Neural Network training. \cref{fig:unstructured} in the appendix shows the facilitated convergence of our approach, which is nonetheless highly robust towards unstructured pruning. The $k$-sparse approach performs well in the medium to high sparsity regime, but quickly collapses for higher compression rates. A clear advantage of $k$-support norm ball constraints is that SFW is able to obtain this performance in the high compression regime while not suffering from underperformance before pruning.
	\item \citet{Pokutta2020} and \citet{Miao2022} specify the desired $L_2$-diameter $\D$ of $\C = P_k(\tau)$ to control the regularization strength and then in turn choose the radius $\tau$ such that $\D(P_k(\tau)) = \D$. Defined this way, $\tau$ depends on $k$ as $\tau = \D/(2\sqrt{k})$. This is counter-intuitive, since $k$ controls both the amount of activated parameters as per design of the LMO as well as the magnitude of the parameter updates, resulting in unnecessarily coupled parameters. As opposed to the $k$-sparse polytope, the diameter of the $k$-support norm ball does not depend on $k$ and hence decouples the parameters $k$ and $\tau$ as desired. \cref{fig:lmo-contour} in the appendix shows the successful decoupling of the radius and $k$. The $k$-support norm ball is less sensitive to hyperparameter changes and obtains better results throughout a wide range of hyperparameter configurations than its $k$-sparse counterpart.

\end{enumerate}

\subsection{A different sparsity notion: pruning singular values}\label{subsec:preliminaries-decomposition}
So far we considered one particular notion of sparsity, namely that of the existence of zeros in a matrix or tensor. Instead of removing individual parameters or groups thereof, networks can also be compressed after training by decomposing parameter matrices into a product of smaller matrices, allowing one to replace a layer by two consecutive ones that require a drastically smaller amount of FLOPs at inference \citep{Denton2014}. The key ingredient is the truncated singular value decomposition (SVD), where setting the smallest singular values to zero leads to an optimal low-rank approximation by virtue of the Eckart–Young–Mirsky theorem. More precisely, given a rank $r$ parameter matrix $\mathcal W \in \R^{n,m}$ with singular values $\sigma = (\sigma_1, \ldots, \sigma_r )$ and SVD $U \in \R^{n,r}, \Sigma \in \R^{r,r}$, $V \in \R^{m, r}$, the $k$-SVD of $\mathcal W$ approximates $\mathcal W$ as
\begin{equation*}
	\mathcal W = U \Sigma V^T = \sum_{i=1}^r u_i \sigma_i v_i^T \approx \sum_{i=1}^k u_i \sigma_i v_i^T = U_k \Sigma_k V^T_k,
\end{equation*}
where the magnitude of \lq pruned\rq{} singular values quantifies the error in approximation. A detailed account of this approach can be found in the appendix. A natural approach to ensure robustness to matrix decomposition is hence based on penalizing the nuclear norm $\norm{\mathcal W}_{*} = \norm{\sigma(\mathcal W)}_1$ \citep{Tai2015, Alvarez2017}, which requires the costly computation of the full SVD in each iteration.

When constraining the parameters to have bounded nuclear norm instead of penalizing it, the LMO solution to \cref{eq:LMO} utilized by SFW can be computed efficiently by requiring only the first singular value-vector-pair \citep{Jaggi2013}. Extending this notion to consider the $k$ largest singular pairs, we propose utilizing the \emph{spectral-$k$-support norm} \citep{McDonald2016a}, for which the ball of radius $\tau$ is defined as 
\begin{equation}\label{eq:spectralksupportnormdef}
 \C^{\sigma}_{k}(\tau) = \conv\set{\mathcal W\ |\ \rank(\mathcal W) \leq k, \norm{\sigma(\mathcal W)}_2 \leq \tau},
\end{equation}
where $\norm{\sigma(\mathcal W)}_2$ is the $2$-Schattennorm of the singular values $\sigma(\mathcal W)$ \citep{Jaggi2013} of matrix $\mathcal W \in \R^{n \times m}$. The following lemma allows us to efficiently compute the LMO solution to batch gradient $\bg \in \R^{n \times m}$.
\begin{lemma}\label{lem:spectralksupportlemma} Let $\mathcal W_t = -\tau\norm{\sigma(\Sigma_k)}_2^{-1} U_k \Sigma_k V_k^T \in \C^{\sigma}_k(\tau)$,
where $U_k \Sigma_k V_k^T$ is the truncated $k$-SVD of $\bg$ such that only the $k$ largest singular values are kept. Then $\mathcal W_t$ is a solution to \cref{eq:LMO}. 
\end{lemma}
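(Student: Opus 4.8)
The plan is to reduce \cref{lem:spectralksupportlemma} to the previously stated LMO formula for the $k$-support norm (essentially \cref{lem:ksupportnormlemma}) by passing through the singular value decomposition. Recall that the linear functional to be minimized is $\langle \mathcal W, \bg\rangle = \operatorname{tr}(\mathcal W^T \bg)$, the standard Frobenius inner product on $\R^{n\times m}$. The key structural fact is that $\C^{\sigma}_k(\tau)$ is the convex hull of the rank-$k$ matrices with Schatten-$2$ norm at most $\tau$, so by \citet{Jaggi2013} the minimizer is attained at one such extreme matrix; we only need to identify which one.

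First I would write $\bg = \sum_{i=1}^r s_i a_i b_i^T$ for its SVD, with singular values $s_1 \geq \cdots \geq s_r > 0$. For any feasible $\mathcal W$, I would bound $|\langle \mathcal W, \bg\rangle|$ from below-in-absolute-value, i.e.\ bound the achievable decrease, by invoking von Neumann's trace inequality: $\langle \mathcal W, \bg\rangle \geq -\sum_{i} \sigma_i(\mathcal W)\, s_i$, with equality when $\mathcal W$ and $\bg$ share left and right singular vectors and the singular values are aligned in the same order. Since $\mathcal W$ has rank at most $k$, only the first $k$ terms $s_1,\dots,s_k$ contribute, so the problem collapses to minimizing $-\sum_{i=1}^k \sigma_i(\mathcal W) s_i$ over singular-value vectors $\sigma(\mathcal W) \in \R^k$ with $\norm{\sigma(\mathcal W)}_2 \leq \tau$. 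That is precisely a linear minimization over an $L_2$-ball of radius $\tau$ against the vector $(s_1,\dots,s_k)$, whose solution is $\sigma(\mathcal W) = \tau (s_1,\dots,s_k)/\norm{(s_1,\dots,s_k)}_2$ — exactly the normalized truncated spectrum. Reassembling with the shared singular vectors $a_i, b_i$ gives $\mathcal W_t = -\tau\, \norm{\sigma(\Sigma_k)}_2^{-1}\, U_k \Sigma_k V_k^T$, matching the claim, and feasibility ($\mathcal W_t \in \C^{\sigma}_k(\tau)$) is immediate since it has rank $k$ and Schatten-$2$ norm exactly $\tau$.

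I would then double-check the two edge cases: if $\rank(\bg) < k$ the truncated SVD simply uses all $r$ components and the argument is unchanged; if $s_k = s_{k+1}$ there is a tie in which singular directions to include, but any choice among the tied maximizers gives the same optimal value, so the stated $\mathcal W_t$ remains \emph{a} (not necessarily unique) solution, consistent with the lemma's phrasing.

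The main obstacle is getting the alignment step fully rigorous: von Neumann's trace inequality handles the case where both matrices' singular values are sorted and the singular subspaces are matched, but one must argue that restricting $\mathcal W$ to rank $\leq k$ genuinely forces the inner product to only "see" the top-$k$ singular values of $\bg$ — i.e.\ that there is no benefit to aligning a low-rank $\mathcal W$ with smaller singular values of $\bg$. This follows because the trace inequality gives $\langle \mathcal W, \bg\rangle \geq -\sum_{i=1}^{\min(k,r)} \sigma_i(\mathcal W)\, s_i$ regardless, and the right-hand side is jointly minimized by putting all the "mass" of $\sigma(\mathcal W)$ against the largest $s_i$'s. Everything else is the routine linear-minimization-over-a-ball computation already used in \cref{lem:ksupportnormlemma}, so I would likely just cite that lemma or \citet{McDonald2016a} for the spectral transfer rather than re-derive it.
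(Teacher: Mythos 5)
Your proof is correct, but it takes a genuinely different route from the paper's. The paper argues geometrically: it writes $\langle \mathcal W, \bg\rangle_F$ in terms of the cosine of the angle between the vectorized matrices, observes that minimizing this angle subject to the rank constraint amounts to finding the rank-$\leq k$ matrix closest to $-\bg$ in Frobenius norm, invokes the Eckart--Young--Mirsky theorem to identify that matrix as the truncated SVD, and then rescales to meet the Schatten-norm constraint. You instead invoke von Neumann's trace inequality to get $\langle \mathcal W, \bg\rangle \geq -\sum_{i=1}^{\min(k,r)} \sigma_i(\mathcal W)\, s_i$ for any rank-$\leq k$ feasible $\mathcal W$, which decouples the problem into (i) choosing the singular subspaces (aligned with the top-$k$ of $\bg$, with equality attained by a simultaneous SVD) and (ii) a linear minimization over the $L_2$-ball of singular-value vectors, whose solution is the normalized truncated spectrum. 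Your route makes the scaling factor $\norm{\sigma(\Sigma_k)}_2^{-1}$ appear transparently and mirrors the structure of \cref{lem:ksupportnormlemma} more directly; it also sidesteps the slightly informal ``rescale appropriately'' step at the end of the paper's argument, since your $L_2$-ball subproblem produces the correctly scaled spectrum outright. The paper's route buys brevity by outsourcing the key optimization to Eckart--Young--Mirsky, but it implicitly relies on the fact that projecting onto the cone of rank-$\leq k$ matrices also identifies the angle-maximizing direction, which is left unstated. One point worth making explicit in your write-up: after minimizing over the $L_2$-ball you should note that the resulting singular-value vector $\tau(s_1,\dots,s_k)/\norm{(s_1,\dots,s_k)}_2$ is automatically nonincreasing, so it is a legitimate sorted spectrum and the equality case of von Neumann's inequality is genuinely attainable by the stated $\mathcal W_t$.
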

A proof can be found in the appendix. Note that similar to the group-$k$-support norm ball taking the magnitude of $k$ largest gradient groups into account, the scaling by $\Sigma_k \norm{\sigma(\Sigma_k)}_2^{-1}$ takes the magnitude of $k$ largest singular values into account. 
In \cref{subsec:experiments-decomposition} we study the capabilities of SFW when constraining the spectral-$k$-support norm of convolutional tensors, which account for the majority of FLOPs at inference \citep{Han2015}. While there exist higher-order generalizations of the SVD to decompose tensors directly \citep[cf.][]{Lebedev2014, Kim2015}, we follow the approach of interpreting the tensor $\mathcal W \in \R^{n\times c \times d \times d}$ with $c$ in-channels, $n$ convolutional filters and spatial size $d$ as an $(n \times cd^2)$-matrix \citep{Alvarez2017, Idelbayev2020a}.

\section{Experimental Results}\label{sec:experimentalresults}
All experiments are conducted using the \emph{PyTorch} framework \citep{Paszke2019}, where we relied on \emph{Weights \& Biases} \citep{Biewald2020} for the analysis of results. To enable reproducibility, our implementation is publicly accessible at \href{https://github.com/ZIB-IOL/compression-aware-SFW}{github.com/ZIB-IOL/compression-aware-SFW}.

We train convolutional architectures such as \emph{Residual Networks} \citep{He2015} and \emph{Wide Residual Networks} \citep{Zagoruyko2016} on \emph{ImageNet-1K} \citep{Russakovsky2015}, \emph{TinyImageNet} \citep{Le2015}, \emph{CIFAR-100} and \emph{CIFAR-10} \citep{Krizhevsky2009} and similarly the semantic segmantation architecture \emph{PSPNet} \citep{Zhao2016} on \emph{CityScapes} \citep{Cordts2016}. The exact training setups as well as grid searches used can be found in the appendix. As a general remark, we follow the experimental guidelines of \citet{Blalock2020} towards standardized comparisons between sparsification methods. Note that we do not report the sparsity-induced theoretical speedups, since we enforce the same amount of compression in every layer. All results are averaged over two seeds with min-max bands indicated for plots and standard deviation for tables. We use a validation set of 10\% of the training data for hyperparameter selection.

In the compression-aware setting we are interested in finding single hyperparameter configurations that perform well under a wide variety of compression rates, i.e., without tuning hyperparameters for each sparsity. When comparing the performance for multiple compression rates at once, we have to decide how to select the \lq best\rq{} hyperparameter configuration. To that end, we select the configuration for each method that results in the highest on-average validation accuracy among all sparsities at stake.

\subsection{Compression awareness: Structured Filter Pruning}\label{subsec:experiments-structured}
For the pruning of convolutional filters, we follow the \emph{PFEC} approach \citep{Li2016}: at the end of training we sort the filters of each convolutional layer by their $L_1$-norm and remove the smallest ones until the desired level of compression is met. We enforce a uniform distribution of sparsity among layers. As opposed to \citet{Li2016}, we follow a One-Shot approach and do not retrain the pruned networks.

We empirically compare our proposed approached (denoted as \emph{SparseFW}) to several related methods. Every experiment includes the most natural baseline, which corresponds to regular training with momentum SGD and weight decay. Apart from that, we have implemented the following recent filter pruning approaches. \emph{SSL} \citep{Wen2016} leverages a group penalty on the filters. Similarly, \emph{GLT} \citep{Alvarez2016} employs a group-lasso on the filters followed by a proximal gradient descent (soft-thresholding) step. \emph{ABFP} \citep{Ding2018} follows an \lq auto-balanced\rq{} approach which penalizes certain filters while actively encouraging others to grow. \emph{SFP} \citep{He2018} softly prunes after each epoch, allowing filters to recover throughout the epoch. For SparseFW, which leverages the proposed group-$k$-support norm constraints, we treat both the radius of the feasible region as well as $k$ as hyperparameters, the latter specifying the fraction of filters to activate in each convolutional layer per iteration. Non-convolutional layers are treated as unconstrained and optimized using SGD with default hyperparameters.

\cref{fig:imagenet-filter} shows the accuracy-vs.-sparsity tradeoff for ResNet-50 trained on ImageNet-1K. SparseFW is able to converge to solutions that are robust to a wide range of filter pruning ratios. Especially in the high sparsity regime SFW is able to keep most of its performance, while other approaches collapse, with the exception of ABFP which can be even more robust for high sparsities. Apart from that, SFW reaches excellent results for a wide range of compression ratios. Full results for other datasets and architectures can be found in the appendix.

\begin{figure}
  \centering
  \includegraphics[width=0.7\linewidth]{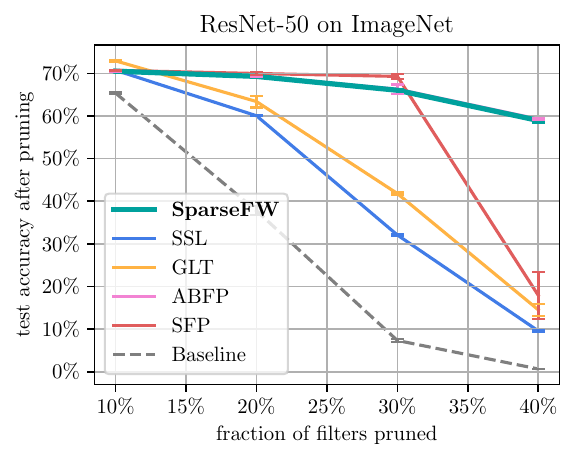}
  \caption{Accuracy-vs.-sparsity tradeoff curves for structured convolutional filter pruning on ImageNet. The plots show the parameter configuration with highest test accuracy after pruning when averaging over all sparsities at stake.}
  \label{fig:imagenet-filter}
\end{figure}%

\subsection{Compression awareness: Low-Rank Decomposition} \label{subsec:experiments-decomposition}

\begin{figure*}
\includegraphics[width=.32\linewidth]{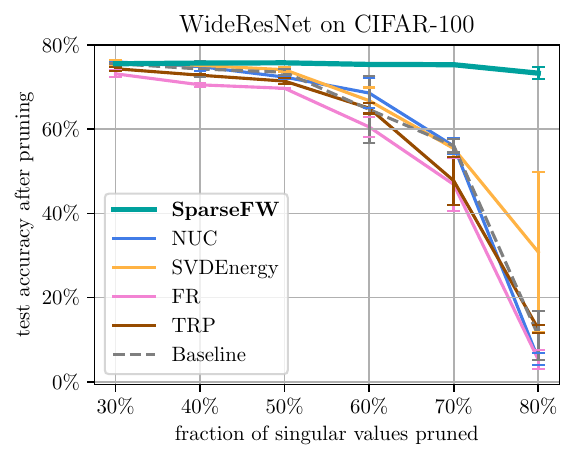}
\hfill
\includegraphics[width=.32\linewidth]{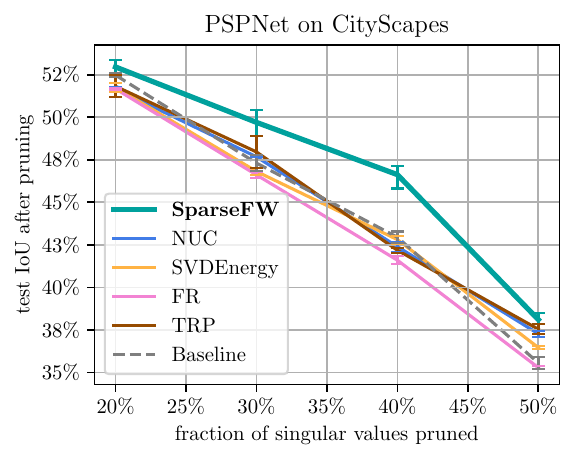}
\hfill
\includegraphics[width=.32\linewidth]{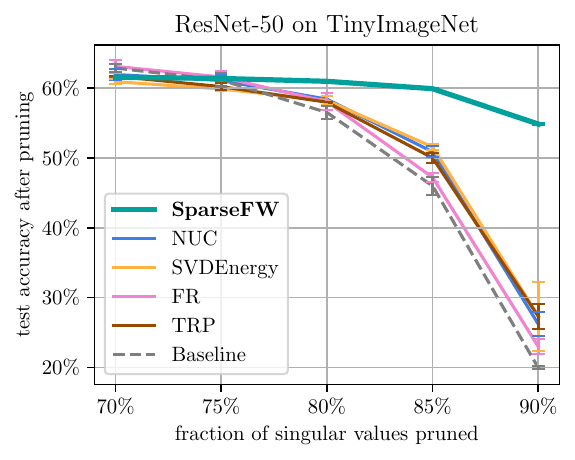}
\caption{Test performance-vs.-sparsity tradeoff curves for low-rank tensor decomposition on CIFAR100 (left), CityScapes (middle) and TinyImageNet (right).}
\label{fig:decomp-plots}
\end{figure*}

\begin{table}
\caption{Training images-per-second throughput of the low-rank methods in comparison to the baseline of regular training.}
\label{tab:efficiency-comparison}
\centering
\resizebox{0.9\columnwidth}{!}{%
\begin{tabular}{l ccc}
\toprule
 & \multicolumn{3}{c}{\textbf{\# training images per second}}\\
\cmidrule{2-4}
    \textbf{Method} & CIFAR-10 & CIFAR-100 & TinyImageNet\\
\midrule
Baseline  & 5664 & 1197 &  756 \\
\midrule
\textbf{SparseFW (ours)} & 1156 & 372 & 338 \\
NUC & 566 & 204 &  160 \\
SVDEnergy & 493 & 174  & 98 \\
FR & 4397 & 1140 &  742 \\
TRP & 565 & 199 & 159  \\
\bottomrule
\end{tabular}
}
\end{table}

We compare SparseFW with spectral-$k$-support norm constraints to other approaches aiming for robustness to tensor decomposition. At the end of training, we set the smallest singular values of each convolutional matrix to zero and replace the layer by two consecutive layers as described in the appendix. Again, this corresponds to a uniform singular value pruning approach, where we note that there exist more sophisticated ways of determining the per-layer pruning ratios \citep{Liebenwein2021a}.

Apart from the regular training baseline using momentum SGD with weight decay, we compare to the following low-rank approaches. \emph{NUC} \citep{Denton2014} and \emph{SVDEnergy} \citep{Alvarez2017} employ nuclear norm regularization, where the former relies on SGD, hence computing the subgradient of a nuclear norm regularization penalty term \citep{Watson1992} and updating the weights accordingly. The latter similarly performs the usual SGD update on the loss, followed by applying the soft-thresholding operator over the singular values of the parameters, a strategy also known as \emph{singular value thresholding} \citep{Cai2008}. Note that while the authors proposed to apply the thresholding after each epoch, we found it to be more effective when applying it after every iteration, however at the cost of decreased efficiency. \emph{TRP} \citep{Xu2020} similarly follows the approach of NUC, but periodically sets the parameters to a low-rank representation, where the threshold to prune singular values is a tunable hyperparameter. As opposed to the previous approaches, \emph{Force Regularization (FR)} \citep{Wen2017} forces convolutional filters to lie in a lower dimensional subspace without having to compute the nuclear norm.

\cref{fig:decomp-plots} compares the post-pruning test performance (measured as accuracy or IoU) for WideResNet on CIFAR-100 (left), PSPNet on CityScapes (middle) and ResNet-50 on TinyImageNet (right) for a range of sparsities, indicating the fraction of singular values that are set to zero at the end of training. Clearly, SparseFW is able to outperform competing approaches, experiencing only a minor accuracy decrease in the high sparsity regime and close to no performance degradation in the medium sparsity regime. For the semantic-segmentation task, we see that SparseFW is able to diminish the decrease in test IoU, however also resulting in larger decreases, which we attribute to the usage of a pretrained backbone instead of a randomly-initialized model. Again, the performances correspond to the on-average best hyperparameter configuration.

Typically, low-rank inducing approaches either utilize the costly singular value decomposition in each iteration for the sake of high accuracy, or they rely on more efficient, potentially imprecise approximations \citep{Yang2020}. SparseFW is intended to handle this apparent tradeoff effectively by relying on the singular value decomposition, but computing only the most important singular vector-value pairs in an efficient yet precise manner. While computing the full SVD of an $n \times m$ matrix traditionally requires $\mathcal O(nm\min(n,m))$ operations, the computation of the $k$-SVD can be realized more efficiently if $k \leq \min(n, m)$ is small \citep{AllenZhu2016}. \cref{tab:efficiency-comparison} compares the different methods with respect to their images-per-second throughput while training on different datasets. All measurements were performed in a fair setting using the same hardware, namely a 24-core Xeon Gold with Nvidia Tesla V100 GPU. A significant advantage of SparseFW is its efficiency, allowing a higher images-per-second throughput as NUC, SVDEnergy and TRP, which require the full SVD in each iteration, despite outperforming these approaches in the compression-aware setting. Only FR reaches an efficiency close to that of regular training, since it does not rely on the (partial) SVD, however often being not as effective as the other approaches. Note that the efficiency of SparseFW is clearly dependent on $k$ since the LMO requires the computation of the $k$ largest singular pairs. For SparseFW, \cref{tab:efficiency-comparison} hence reports the throughput based on the $k$ given by the on-average best performing hyperparameter configuration. We emphasize that we used a naive implementation of the $k$-SVD power method \citep{Bentbib2015}, noting that there are more sophisticated and faster algorithms to compute the $k$-SVD \citep{AllenZhu2016}.

\subsection{The impact of the learning rate schedule}\label{subsec:ablation-lr}
\begin{figure}
\centering
\includegraphics[width=0.7\linewidth]{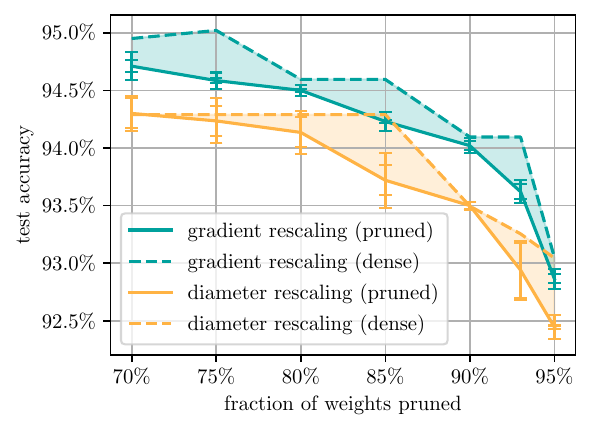}
\caption{ResNet-18 on CIFAR-10: For each pruning amount, the best hyperparameter configuration w.r.t. the accuracy after pruning (\emph{pruned}) is depicted. The corresponding value before pruning (\emph{dense}) is depicted as a dashed line.}
\label{fig:diametergradient}
\end{figure}

The learning rate $\eta_t \in [0,1]$ determines the length of the parameter update relative to the size of the feasible region. This coupling between regularization strength and step size makes the tuning of the learning rate cumbersome. To decouple the tuning of the learning rate from the size of the feasible region, \citet{Pokutta2020} propose two different learning rate rescaling mechanisms: \emph{diameter rescaling} and \emph{gradient rescaling}, the latter being used throughout our experiments in the preceding sections. While the former divides the learning rate by the $L_2$-diameter $\mathcal D(\C)$ of $\C$, gradient rescaling rescales the update direction length to that of the batch gradient, i.e., $\hat \eta_t \define \eta_t \norm{\bg}_2 / \norm{v_t - \theta_t}_2$.

It is however largely unclear what effect these normalization schemes have on both the convergence for regular training as well as the robustness to pruning, noting that the learning rate of SFW explicitly controls the decay on non-activated parameters. \cref{fig:diametergradient} shows the test accuracy vs. sparsity tradeoff curves directly before and after magnitude-pruning of the parameters, comparing the two rescaling variants when training with $k$-support norm constraints. Gradient rescaling consistently outperforms its diameter-based counterpart w.r.t. both dense as well as pruned test accuracy. 

We found the denominator of gradient rescaling not to be subject to much variation, whereas the batch gradient norm dynamically changes the learning rate over time. \cref{fig:gradrescaling} in the appendix compares the evolution of $\norm{\bg}$ for two different radii of the $k$-support norm ball (with fixed $k$), where we compare to usual SGD training with weight decay. For both SGD and SFW, $\norm{\bg}$ is subject to noise and increases until 75\% of the training process, despite the continuous decrease of the train loss. In fact, the batch gradient norm is not significantly smaller than at the start of training even though the loss converges. This behaviour might best be explainable by the presence of \emph{Batch-Normalization} layers, whose interplay with weight decay has been analyzed by \citet{Laarhoven2017} and \citet{Hoffer2018}: layers preceding a Batch-Normalization are rescaling invariant, that is their output remains unchanged when multiplying all parameters by a scalar, however rescaling them results in inverse rescaling of the gradient norm in subsequent layers and iterations. 
Weight decay continuously decreases the scale of the parameters and hence increases the scale of the batch gradient, where stronger decay of the former leads to stronger increase of the latter. Since in gradient rescaling the norm of the batch gradient also influences the strength of the decay of the parameters, this process has a self-accelerating dynamic. This dynamic results in larger steps towards the (sparse) vertices of the $k$-support norm ball, leading to a stronger decay on the previous parameter configuration, which in turn increases the robustness to pruning, making gradient rescaling the method of choice in that setting. 

The following theorem lays the theoretical foundation by showing that incorporating the batch gradient norm into the learning rate leads to convergence of SFW at the specified rate, that is, the expected product of exact gradient norm and the \emph{Frank-Wolfe Gap}
\begin{equation*}
	\mathcal G(\theta_t) = \max_{v \in \mathcal C} \langle v-\theta_t, -\nabla L(\theta_t) \rangle,
\end{equation*}
being the measure of convergence \citep{Reddi2016}, decays at a rate of $\mathcal O (T^{-1/2})$. The precise statement as well as a proof can be found in \cref{subsec:convergenceproof}.

\begin{theorem}[Convergence of gradient rescaling, informal] \label{thm:gradient_rescaling_convergence}
    Let $L$ be M-smooth, $\ell$ be $G$-Lipschitz and $\eta_t = \norm{\nabla_t}\eta$ for appropriately chosen $\eta$ and all $0 \leq t < T$. If $\theta_a$ is chosen uniformly at random from the SFW iterates $\{\theta_i\}_{i < T}$, then we have $\E\, \lbrack\mathcal G(\theta_a)\cdot \norm{\nabla L(\theta_a)}\rbrack = \mathcal O(T^{-1/2})$,
	where $\mathbb E$ denotes the expectation w.r.t. all the randomness present.
\end{theorem}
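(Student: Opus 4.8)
The plan is to mimic the standard non-convex (stochastic) Frank-Wolfe analysis of \citet{Reddi2016}, but to carry the extra factor $\norm{\nabla L(\theta_t)}$ through the descent inequality rather than discard it. First I would write down the $M$-smoothness bound along the SFW update $\theta_{t+1} = (1-\hat\eta_t)\theta_t + \hat\eta_t v_t$ with $\hat\eta_t = \eta_t \norm{\bg}_2 / \norm{v_t - \theta_t}_2$. Since $\theta_{t+1} - \theta_t = \hat\eta_t (v_t - \theta_t)$ and the rescaling makes $\norm{\theta_{t+1} - \theta_t}_2 = \eta_t \norm{\bg}_2 = \eta\,\norm{\nabla_t}\,\norm{\bg}_2$ (using $\eta_t = \eta\norm{\nabla_t}$), smoothness gives
\begin{equation*}
L(\theta_{t+1}) \le L(\theta_t) + \langle \nabla L(\theta_t), \theta_{t+1}-\theta_t\rangle + \frac{M}{2}\norm{\theta_{t+1}-\theta_t}_2^2.
\end{equation*}
The linear term I would split by adding and subtracting $\bg = \nabtil\Loss(\theta_t)$: $\langle \nabla L(\theta_t), \hat\eta_t(v_t-\theta_t)\rangle = \hat\eta_t\langle \bg, v_t - \theta_t\rangle + \hat\eta_t\langle \nabla L(\theta_t) - \bg, v_t - \theta_t\rangle$.

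Next I would handle the two pieces. For the first, by the LMO optimality of $v_t$ one has $\langle \bg, v_t - \theta_t\rangle \le \langle \bg, u - \theta_t\rangle$ for every $u \in \C$; choosing an arc-minimizer of $\nabla L(\theta_t)$ and again adding/subtracting $\bg$ relates this to $-\G(\theta_t)$ plus another noise term bounded by $\norm{\nabla L(\theta_t) - \bg}_2 \D(\C)$. Multiplying through by $\hat\eta_t = \eta\norm{\nabla_t}\norm{\bg}_2/\norm{v_t-\theta_t}_2$, I get a term of the form $-\,\eta\,\frac{\norm{\bg}_2}{\norm{v_t-\theta_t}_2}\,\norm{\nabla_t}\,\G(\theta_t)$, which is exactly the quantity the theorem controls up to the ratio $\norm{\bg}_2/\norm{v_t-\theta_t}_2$. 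I would bound that ratio below by a constant $c>0$ (e.g.\ $\norm{v_t-\theta_t}_2 \le \D(\C)$, and handle $\norm{\bg}_2$ via a lower-bound argument, or — more cleanly — absorb the ratio into the noise by writing $\norm{\nabla_t}\,\G(\theta_t)\,\norm{\bg}_2/\norm{v_t-\theta_t}_2 = \norm{\nabla L(\theta_t)}\,\G(\theta_t)\cdot(\text{stuff})$ after replacing $\nabla_t$-dependence in expectation). The cross/noise terms I would control in expectation using $\E\norm{\nabla L(\theta_t) - \bg}_2 \le \E\norm{\nabla L(\theta_t)-\bg}_2$, which the $G$-Lipschitz assumption on $\ell$ bounds by a constant $\sigma$ (bounded-variance-type consequence of Lipschitzness), and $\norm{\theta_{t+1}-\theta_t}_2^2 = \eta^2\norm{\nabla_t}^2\norm{\bg}_2^2 \le \eta^2 G^2 \D(\C)^2$ since $\norm{\nabla_t},\norm{\bg}_2 \le G$ by Lipschitzness.

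Then I would rearrange into $\eta\, c\, \E[\norm{\nabla L(\theta_t)}\,\G(\theta_t)] \le \E[L(\theta_t)] - \E[L(\theta_{t+1})] + (\text{const})\,\eta^2 + (\text{const})\,\eta\,\sigma$, telescope over $t = 0,\dots,T-1$, divide by $T$ and by $\eta c$, and use that $\theta_a$ is uniform over the iterates so the left side becomes $\E[\norm{\nabla L(\theta_a)}\,\G(\theta_a)]$. The sum telescopes to $L(\theta_0) - L^\star = O(1)$, giving a bound of order $\frac{1}{\eta T} + \eta + \sigma$; choosing $\eta \asymp T^{-1/2}$ yields the $O(T^{-1/2})$ rate (with the caveat that the constant noise floor $\sigma$ would, as in \citet{Reddi2016}, require either a growing batch size or be the price one pays — I expect the formal statement in the appendix to use an increasing minibatch schedule $b_t$ so that $\sigma$ itself shrinks like $T^{-1/2}$).

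The main obstacle I anticipate is the factor $\norm{\bg}_2/\norm{v_t - \theta_t}_2$ introduced by gradient rescaling: unlike in the vanilla analysis, the effective step is data-dependent and not uniformly bounded away from zero, so one must argue that this ratio does not degrade the bound — either by a clean lower bound on $\norm{v_t-\theta_t}_2^{-1}$ combined with handling the event $\norm{\bg}_2$ small, or by absorbing the $\bg$-dependence into the stochastic-noise bookkeeping in expectation. Getting the conditional expectations right (the LMO direction $v_t$ and the step $\hat\eta_t$ both depend on $\bg$, which is correlated with the noise $\nabla L(\theta_t) - \bg$) is the delicate part and is presumably why the theorem states the bound on the \emph{product} $\G(\theta_a)\norm{\nabla L(\theta_a)}$ rather than on $\G(\theta_a)$ alone.
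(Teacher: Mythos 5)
Your skeleton matches the paper's proof (which follows \citet{Reddi2016}): smoothness along the update, add-and-subtract the stochastic gradient, compare $v_t$ against the exact-gradient minimizer $\hat v_t$ via LMO optimality to produce $-\eta_t\,\mathcal G(\theta_t)$ plus a noise term controlled by Cauchy--Schwarz and $\E\|\nabla L(\theta_t)-\nabla_t\|\le G/b^{1/2}$ with $b=T$, then telescope. However, there are two concrete problems. First, you misread the step size: the theorem takes the plain convex-combination update $\theta_{t+1}=\theta_t+\eta_t(v_t-\theta_t)$ with $\eta_t=\eta\|\nabla_t\|$, not the composition of this with the additional factor $\|\nabla_t\|_2/\|v_t-\theta_t\|_2$ from the practical rescaling rule. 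Consequently the ratio $\|\nabla_t\|_2/\|v_t-\theta_t\|_2$ that you identify as ``the main obstacle'' simply does not appear; the only facts needed are $\eta_t\le G\eta$ (so the error and quadratic terms are bounded) and $\|v_t-\theta_t\|\le D$. Your effort to lower-bound that ratio, and the resulting $\|\nabla_t\|^2$ in $\|\theta_{t+1}-\theta_t\|$, are artifacts of analyzing a different update.

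Second, and more importantly, the one genuinely non-standard step is left as ``$(\text{stuff})$'': how the surviving term $\sum_t\E[\eta_t\,\mathcal G(\theta_t)]=\eta\sum_t\E[\|\nabla_t\|\,\mathcal G(\theta_t)]$ turns into $\eta\sum_t\E[\|\nabla L(\theta_t)\|\,\mathcal G(\theta_t)]$. The paper does this by the tower property and Jensen's inequality: conditioning on $\theta_{0:t}$, the factor $\mathcal G(\theta_t)$ is deterministic and
\begin{equation*}
\E\bigl[\|\nabla_t\|\ \big|\ \theta_{0:t}\bigr]\ \ge\ \bigl\|\E[\nabla_t\mid\theta_{0:t}]\bigr\|\ =\ \|\nabla L(\theta_t)\|,
\end{equation*}
using unbiasedness of the minibatch estimator and convexity of the norm. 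Since this term enters the telescoped inequality with the sign that makes a lower bound sufficient, the product $\mathcal G(\theta_a)\cdot\|\nabla L(\theta_a)\|$ is exactly what one can control --- you correctly guessed this is \emph{why} the theorem is stated for the product, but without the Jensen step the argument is incomplete. Everything else in your outline (Cauchy--Schwarz handles the correlation between $v_t$ and the noise, $b=T$ kills the $G/b^{1/2}$ floor, $\eta\asymp T^{-1/2}$ balances the remaining terms) is consistent with the paper.
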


\section{Discussion and Outlook}\label{sec:discussion}
We proposed to utilize a versatile family of norm constraints to, together with the SFW algorithm, train deep neural networks to state-of-the-art dense performance as well as robustness to compression for a wide range of compression ratios. Our experimental results show that SFW can leverage highly structured feasible regions to avoid performance degradation when performing convolutional filter pruning or low-rank tensor decomposition. For the latter, SFW can significantly outperform similar methods both in terms of accuracy and efficiency. As a special case, our proposed approach includes the unstructured pruning case and we showed how utilizing the proposed norm can mitigate the drawbacks of and improve upon the results of \citet{Miao2022}. We hope that our findings regarding the importance of the learning rate rescaling as well as \cref{thm:gradient_rescaling_convergence} stimulate further research in the direction of compression-aware training with SFW.

We emphasize that our results hold primarily in the setting that we described, namely that of compression-aware training, where the training is sparsity-agnostic and retraining is prohibitive. Our goal was to show the versatility of SFW, which provides a suitable algorithmic framework for enforcing structure throughout training. If the sparsity can be incorporated into training, significantly more complex approaches can be applied.

\bibliography{references}
\bibliographystyle{icml2023}

\newpage
\appendix
\onecolumn
\section{Appendix}
\subsection{Technical details}
Whenever using SFW, we employ a momentum variant \citep{Pokutta2020} and enforce \emph{local} constraints, i.e., we constrain the parameters of each layer in a separate feasible region. As suggested by \citet{Pokutta2020}, we do not tune the radius $\tau$ of the feasible region $\C(\tau)$ directly but rather specify a scalar factor $w > 0$ and set the $L_2$-diameter of the feasible region of each layer as
\begin{equation*}
	\D = 2w\mathbb E (\|\theta\|_2),
\end{equation*}
where the expected $L_2$-norm of the layer parameters $\theta$ are simply estimated by computing the mean among multiple default initializations. $\D$ is then in turn used to compute $\tau$ such that $\D(\C(\tau)) = \D$. This allows us to control the $L_2$-regularization strength of each layer. We do not prune biases and batch-normalization parameters, as they only account for a small fraction of the total number of parameters yet are crucial for obtaining well-performing models \citep{Evci2019}. Apart from the direct comparison between diameter and gradient rescaling, we use gradient rescaling throughout all experiments.

While there exist multiple successful strategies to retrain after pruning \citep{Renda2020, Le2021, Zimmer2021}, the compression-aware training setting requires the methods to be compared directly after pruning without retraining. However, similarly to \citet{Li2020d} and \citet{Peste2022} we notice that the validation accuracy after pruning can be significantly increased by recomputing the Batch Normalization \citep{Ioffe2015} statistics, which have to be recalibrated since the pre-activations of the hidden layers are distorted by pruning. To that end, we recompute the statistics on the train dataset after pruning and note that in practice only a fraction of the training data is necessary to recalibrate the Batch Normalization layers.

In the following, we state technical details of our approaches to structured and unstructured pruning, as well as low-rank matrix decomposition.

\subsubsection{Structured Filter and Unstructured Pruning}\label{subsec:app-filterpruning}
As outlined in the main section, we follow the approach of \citet{Li2016} and remove the convolutional filters with smallest $L_1$-norm. Since each filter might correspond to a different number of parameters, depending on the convolutional layer it is located in, the $L_1$-norm of filters is incomparable among different layers. We hence follow the local approach and prune the same amount in each convolutional layer until the desired sparsity is met. Since this will lead to the same theoretical speedup, independent of the algorithm at stake, we omit these values.

For unstructured pruning, we employ the usual magnitude pruning, that is, we remove the parameters with smallest absolute value until we meet the desired level of compression. As we found it to work best among all algorithms, we prune \emph{globally}, i.e. we select the smallest weights among all network parameters eligible for pruning. We note however that there exists several different magnitude-based selection approaches \citep[cf. e.g.][]{Han2015, Gale2019, Lee2020a}.

\subsubsection{Low-Rank Matrix decomposition}\label{subsec:app-decomposition}
We describe the rationale behind the decomposition of matrices with (preferably) low rank. Low-Rank matrix decomposition is centered around the idea of truncating the singular value decomposition (SVD). Let $\theta = U\Sigma V^T = \sum_{i=1}^r u_i \sigma_i v_i^T$ be the SVD of rank-$r$ matrix $\theta \in \R^{n \times m}$, where $u_i, v_i$ are the singular vectors to singular values $\sigma_1 \geq \ldots \geq \sigma_r > 0$. The goal is now to find the best low-rank approximation $\hat \theta \in \R^{n \times m}$ to $\theta$, namely to solve
	\begin{equation} \label{eq:lwo-rank-problem}
		\min_{\rank(\hat \theta) \leq t} \norm{\theta - \hat \theta}_F.
	\end{equation}
By the Eckart–Young–Mirsky theorem, a minimizer is given by $U_t\Sigma_tV_t^T \define \sum_{i=1}^t u_i \sigma_i v_i^T$, where the smallest $t-r$ singular values are truncated. If the singular values now decay rapidly, i.e., only the first $t$ singular values contain most of the \emph{energy} of $\theta$, then this approximation can be applied as a post-processing step without much change in the output of a linear layer \citep{Denton2014}. Consequently, a natural approach to ensure stability w.r.t. matrix decomposition is based on encouraging the parameter matrices to have low-rank throughout training, most prominently done by regularizing the nuclear norm of the weight matrix \citep{Tai2015, Alvarez2017}. If $t$ is small, then the number of FLOPs can be drastically reduced by decomposing a layer into multiple layers. For linear layers this is straightforward: let $(\theta, \beta)$ be weights and biases of a linear layer with SVD of $\theta$ as above. For input $x$, the layer computes
	\begin{equation}
		\theta x + \beta = U\Sigma V^Tx + \beta \approx U_t(\Sigma_tV_t^Tx) + \beta,
	\end{equation}
being interpretable as the consecutive application of two linear layers: $(\Sigma_tV_t^T, 0)$ followed by $(U_t, \beta)$, possibly reducing the number of parameters from $nm$ to $t(n+m)$. For a four-dimensional convolutional tensor $\theta \in \R^{n\times c \times d \times d}$, where $c$ is the number of in-channels, $n$ the number of convolutional filters, and $d$ is the spatial size, we cannot directly construct the SVD. However, we follow an approach similar to those of \citet{Alvarez2017} and \citet{Idelbayev2020a}, interpreting $\theta$ as a $(n \times cd^2)$ matrix, whose truncated SVD decomposition allows us to replace the layer by two consecutive convolutional layers, the first one having $t$ filters, $c$ channels and spatial size $d$, followed by $n$ filters, $t$ channels and spatial size of one.

\subsection{Experimental setup and extended results}
\cref{tab:problem_config} shows the exact training configurations we used throughout all experiments, where we always relied on a linearly decaying learning rate, as suggested by \citet{Li2020a}.
In the following, we state the hyperparameter grids used as well as full tables and missing plots.
\begin{table}
\caption{Exact training configurations used throughout the experiments. For all datasets and architectures we use a linear decay of the learning rate starting from 0.1. The dense test accuracy refers to the optimal accuracy we achieve using momentum SGD with weight decay. For the semantic segmentation task we report the Intersection-over-Union (IoU) on the test set.}
\label{tab:problem_config}
\resizebox{\textwidth}{!}{%
     \begin{tabular}{ll llllll}
\toprule
Dataset & Network (number of weights) & Epochs & Batch size & Momentum  & Dense test accuracy/IoU\\
\midrule
CIFAR-10 & ResNet-18 (11 Mio) & 100 & 128 & 0.9 & 95.0\% {\footnotesize \textpm 0.04\%} \\

CIFAR-100 & WideResNet-28x10 (37 Mio) & 100 & 128 & 0.9 & 76.7\% {\footnotesize \textpm 0.2\%} \\
TinyImagenet & ResNet-50 (26 Mio) & 100 & 128 & 0.9 & 64.9\% {\footnotesize \textpm 0.1\%} \\
ImageNet & ResNet-50 (26 Mio) & 90 & 1024 & 0.9 & 75.35\% {\footnotesize \textpm 0.1\%} \\

CityScapes & PSPNet (68 Mio) & 200 & 12 & 0.9 & 58.5 IoU {\footnotesize \textpm 0.5} \\
\bottomrule
\end{tabular}
}
\end{table}

\FloatBarrier
\subsubsection{Structured Filter Pruning}
\paragraph{CIFAR-10 Hyperparameter grids}
If not specified otherwise, we use weight decay values of $\{\textrm{1e-4, 5e-4}\}$ for all algorithms.
\begin{itemize}
    \item SparseFW: We tune the fractional $k \in \{0.1, 0.2, 0.3\}$ and the multiplier $w \in \{10, 20, 30\}$ of the $L_2$-diameter.
    \item SSL: We tune the the filter group penalty factor $\lambda \in \{\textrm{1e-5, 5e-5, 1e-4, 5e-4, 1e-3, 5e-3}\}$.
    \item GLT: We tune the the filter group penalty factor $\lambda \in \{\textrm{1e-5, 5e-5, 1e-4, 5e-4, 1e-3, 5e-3}\}$ and the lasso tradeoff between $0$ and $0.5$.
    \item ABFP: We tune the fractional $k \in \{0.1, 0.2, 0.3, 0.4\}$ and the filter group penalty factor $\lambda \in \{\textrm{1e-5, 5e-5, 1e-4, 5e-4, 1e-3, 5e-3}\}$.
    \item SFP: We tune the fractional $k \in \{0.5, 0.6, 0.7, 0.8\}$. Further, we found it beneficial to tune the epoch at which SFP starts the sparsification between $\{0, 10, 25\}$, since to early starts might result in a model collapse.
\end{itemize}

\paragraph{CIFAR-100 Hyperparameter grids}
If not specified otherwise, we use weight decay values of $\{\textrm{1e-4, 5e-4}\}$ for all algorithms.
\begin{itemize}
    \item SparseFW: We tune the fractional $k \in \{0.15, 0.2, 0.25, 0.3\}$ and the multiplier $w \in \{20, 30, 40\}$ of the $L_2$-diameter.
    \item SSL: We tune the the filter group penalty factor $\lambda \in \{\textrm{1e-5, 5e-5, 1e-4, 5e-4, 1e-3, 5e-3}\}$.
    \item GLT: We tune the the filter group penalty factor $\lambda \in \{\textrm{1e-4, 5e-4, 1e-3, 5e-3, 1e-2, 5e-3}\}$ and the lasso tradeoff between $0$ and $0.5$.
    \item ABFP: We tune the fractional $k \in \{0.1, 0.2, 0.3\}$ and the filter group penalty factor $\lambda \in \{\textrm{1e-5, 5e-5, 1e-4, 5e-4, 1e-3, 5e-3}\}$.
    \item SFP: We tune the fractional $k \in \{0.6, 0.7, 0.8, 0.9\}$. Further, we found it beneficial to tune the epoch at which SFP starts the sparsification between $\{0, 10, 25\}$, since to early starts might result in a model collapse.
\end{itemize}

\paragraph{TinyImagenet Hyperparameter grids}
If not specified otherwise, we use weight decay values of $\{\textrm{1e-4, 5e-4}\}$ for all algorithms.
\begin{itemize}
    \item SparseFW: We tune the fractional $k \in \{0.15, 0.2, 0.25, 0.3\}$ and the multiplier $w \in \{20, 30, 40\}$ of the $L_2$-diameter.
    \item SSL: We tune the the filter group penalty factor $\lambda \in \{\textrm{1e-5, 5e-5, 1e-4, 5e-4, 1e-3, 5e-3}\}$.
    \item GLT: We tune the the filter group penalty factor $\lambda \in \{\textrm{1e-4, 5e-4, 1e-3, 5e-3, 1e-2, 5e-3}\}$ and the lasso tradeoff between $0$ and $0.5$.
    \item ABFP: We tune the fractional $k \in \{0.1, 0.2, 0.3\}$ and the filter group penalty factor $\lambda \in \{\textrm{1e-5, 5e-5, 1e-4, 5e-4, 1e-3, 5e-3}\}$.
    \item SFP: We tune the fractional $k \in \{0.6, 0.7, 0.75, 0.8, 0.85, 0.9\}$. Further, we found it beneficial to tune the epoch at which SFP starts the sparsification between $\{0, 20, 50\}$, since to early starts might result in a model collapse.
\end{itemize}

\paragraph{Imagenet Hyperparameter grids}
If not specified otherwise, we use weight decay values of $\{\textrm{1e-4}\}$ for all algorithms.
\begin{itemize}
    \item SparseFW: We tune the fractional $k \in \{0.15, 0.2, 0.25, 0.3, 0.35\}$ and the multiplier $w \in \{20, 25, 30, 35\}$ of the $L_2$-diameter.
    \item SSL: We tune the the filter group penalty factor $\lambda \in \{\textrm{1e-5, 5e-5, 1e-4, 5e-4, 1e-3, 5e-3}\}$.
    \item GLT: We tune the the filter group penalty factor $\lambda \in \{\textrm{1e-4, 5e-4, 1e-3, 5e-3, 1e-2, 5e-3}\}$ and the lasso tradeoff between $0$ and $0.5$.
    \item ABFP: We tune the fractional $k \in \{0.1, 0.2, 0.3\}$ and the filter group penalty factor $\lambda \in \{\textrm{1e-5, 5e-5, 1e-4, 5e-4, 1e-3, 5e-3}\}$.
    \item SFP: We tune the fractional $k \in \{0.6, 0.7, 0.75, 0.8, 0.85, 0.9\}$. Further, we found it beneficial to tune the epoch at which SFP starts the sparsification between $\{0, 20, 50\}$, since to early starts might result in a model collapse.
\end{itemize}

\begin{table}
\caption{ResNet-18 on CIFAR-10: Comparison of Filter Pruning approaches. For each sparsity we indicate the achieved test accuracy after pruning averaged over all random seeds including standard deviation.}
\centering
\begin{tabular}{l llll}
\toprule
 & \multicolumn{4}{c}{\textbf{Sparsity}}\\
\cmidrule{2-5}
    \textbf{Method} & 60\% & 70\% & 80\% & 90\% \\
\midrule
Baseline & 60.32 \small{\textpm 0.18} & 32.83 \small{\textpm 0.52} & 14.49 \small{\textpm 4.68} & 10.92 \small{\textpm 0.20} \\
\midrule
\textbf{SparseFW} & 90.72 \small{\textpm 0.09} & 90.39 \small{\textpm 0.16} & 87.51 \small{\textpm 0.44} & 35.15 \small{\textpm 1.30} \\
\textbf{SSL} & 91.26 \small{\textpm 0.13} & 87.13 \small{\textpm 1.72} & 63.66 \small{\textpm 0.36} & 16.25 \small{\textpm 1.24} \\
\textbf{GLT} & 68.54 \small{\textpm 6.12} & 45.42 \small{\textpm 6.99} & 21.03 \small{\textpm 3.69} & 9.36 \small{\textpm 0.22} \\
\textbf{ABFP} & 91.45 \small{\textpm 0.49} & 91.43 \small{\textpm 0.52} & 91.17 \small{\textpm 0.42} & 28.91 \small{\textpm 0.71} \\
\textbf{SFP} & 88.90 \small{\textpm 3.78} & 67.66 \small{\textpm 17.25} & 28.44 \small{\textpm 12.21} & 10.55 \small{\textpm 1.33} \\

\bottomrule
\end{tabular}
\end{table}

\begin{table}
\caption{WideResNet on CIFAR-100: Comparison of Filter Pruning approaches. For each sparsity we indicate the achieved test accuracy after pruning averaged over all random seeds including standard deviation.}
\centering
\begin{tabular}{l llll}
\toprule
 & \multicolumn{4}{c}{\textbf{Sparsity}}\\
\cmidrule{2-5}
    \textbf{Method} & 10\% & 20\% & 30\% & 40\% \\
\midrule
Baseline & 71.78 \small{\textpm 2.29} & 65.30 \small{\textpm 2.82} & 51.33 \small{\textpm 0.34} & 35.00 \small{\textpm 2.59} \\
\midrule
\textbf{SparseFW} & 72.21 \small{\textpm 0.30} & 72.24 \small{\textpm 0.29} & 72.20 \small{\textpm 0.23} & 71.23 \small{\textpm 0.18} \\
\textbf{SSL} & 72.10 \small{\textpm 1.20} & 67.51 \small{\textpm 0.25} & 59.26 \small{\textpm 0.77} & 42.18 \small{\textpm 2.67} \\
\textbf{GLT} & 74.08 \small{\textpm 0.87} & 69.37 \small{\textpm 1.54} & 54.43 \small{\textpm 4.77} & 28.19 \small{\textpm 6.30} \\
\textbf{ABFP} & 71.49 \small{\textpm 0.12} & 71.50 \small{\textpm 0.17} & 71.53 \small{\textpm 0.11} & 71.51 \small{\textpm 0.13} \\
\textbf{SFP} & 73.45 \small{\textpm 0.35} & 73.47 \small{\textpm 0.34} & 73.05 \small{\textpm 0.04} & 63.82 \small{\textpm 2.14} \\

\bottomrule
\end{tabular}

\end{table}

\begin{table}
\caption{ResNet-50 on TinyImagenet: Comparison of Filter Pruning approaches. For each sparsity we indicate the achieved test accuracy after pruning averaged over all random seeds including standard deviation.}
\centering
\begin{tabular}{l llllll}
\toprule
 & \multicolumn{6}{c}{\textbf{Sparsity}}\\
\cmidrule{2-7}
    \textbf{Method} & 10\% & 20\% & 30\% & 40\% & 50\% & 60\% \\
\midrule
Baseline& 62.20 \small{\textpm 0.26} & 57.44 \small{\textpm 0.04} & 49.69 \small{\textpm 0.47} & 39.63 \small{\textpm 0.69} & 26.75 \small{\textpm 1.32} & 13.04 \small{\textpm 0.46} \\
\midrule
\textbf{SparseFW} & 62.49 \small{\textpm 0.15} & 62.51 \small{\textpm 0.03} & 62.34 \small{\textpm 0.13} & 61.84 \small{\textpm 0.22} & 60.98 \small{\textpm 0.03} & 58.01 \small{\textpm 0.41} \\
\textbf{SSL} & 60.44 \small{\textpm 0.37} & 58.86 \small{\textpm 0.59} & 56.58 \small{\textpm 1.46} & 53.06 \small{\textpm 2.28} & 46.82 \small{\textpm 2.69} & 36.79 \small{\textpm 2.11} \\
\textbf{GLT} & 60.71 \small{\textpm 0.46} & 57.47 \small{\textpm 1.09} & 51.75 \small{\textpm 0.52} & 42.96 \small{\textpm 1.36} & 30.09 \small{\textpm 1.17} & 14.86 \small{\textpm 1.21} \\
\textbf{SFP} & 62.06 \small{\textpm 0.38} & 62.07 \small{\textpm 0.37} & 62.06 \small{\textpm 0.37} & 62.06 \small{\textpm 0.37} & 49.31 \small{\textpm 1.33} & 26.76 \small{\textpm 2.22} \\
\textbf{ABFP} & 60.28 \small{\textpm 0.63} & 60.31 \small{\textpm 0.69} & 60.28 \small{\textpm 0.58} & 60.20 \small{\textpm 0.59} & 60.20 \small{\textpm 0.49} & 59.99 \small{\textpm 0.55} \\

\bottomrule
\end{tabular}

\end{table}

\begin{table}
\caption{ResNet-50 on Imagenet: Comparison of Filter Pruning approaches. For each sparsity we indicate the achieved test accuracy after pruning averaged over all random seeds including standard deviation.}
\centering
\begin{tabular}{l llll}
\toprule
 & \multicolumn{4}{c}{\textbf{Sparsity}}\\
\cmidrule{2-5}
    \textbf{Method} & 10\% & 20\% & 30\% & 40\%\\
\midrule
Baseline & 65.37 \small{\textpm 0.25} & 37.78 \small{\textpm 1.46} & 7.33 \small{\textpm 0.48} & 0.65 \small{\textpm 0.04} \\
\midrule
\textbf{SparseFW} & 70.50 \small{\textpm 0.19} & 69.28 \small{\textpm 0.00} & 66.03 \small{\textpm 0.26} & 58.95 \small{\textpm 0.67} \\
\textbf{SSL} & 70.73 \small{\textpm 0.08} & 60.05 \small{\textpm 0.19} & 32.08 \small{\textpm 0.35} & 9.58 \small{\textpm 0.30} \\
\textbf{GLT} & 72.93 \small{\textpm 0.20} & 63.39 \small{\textpm 1.98} & 41.80 \small{\textpm 0.36} & 14.53 \small{\textpm 1.94} \\
\textbf{SFP} & 70.69 \small{\textpm 0.07} & 69.97 \small{\textpm 0.38} & 69.28 \small{\textpm 0.89} & 17.92 \small{\textpm 7.73} \\
\textbf{ABFP} & 70.54 \small{\textpm 0.23} & 69.33 \small{\textpm 0.33} & 66.29 \small{\textpm 1.56} & 59.27 \small{\textpm 0.39} \\

\bottomrule
\end{tabular}

\end{table}

\FloatBarrier
\subsubsection{Unstructured Weight Pruning}
\paragraph{CIFAR-10 Hyperparameter grids}
For both the $k$-sparse polytope as well as $k$-support norm ball, we tune the fractional $k \in \{0.1, 0.2, 0.3, 0.4, 0.5\}$ and the multiplier $w \in \{10, 20, 30, 40, 50\}$.

\begin{figure}
  \centering
  \includegraphics[width=0.5\linewidth]{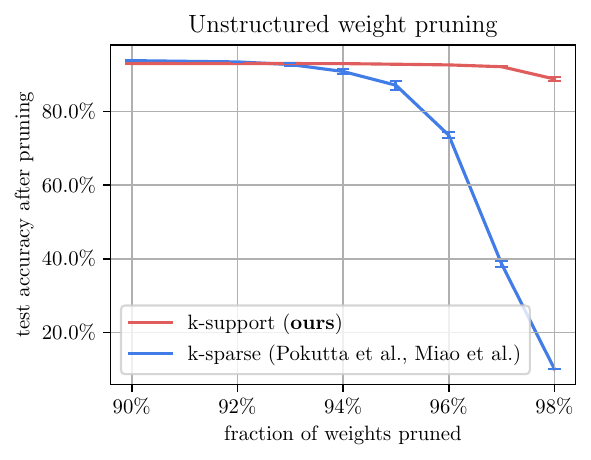}
  \caption{ResNet-18 on CIFAR-10: Accuracy-vs.-sparsity tradeoff curves for unstructured weight pruning comparing our approach to the existing $k$-sparse approach.}
  \label{fig:unstructured}
\end{figure}%

\cref{fig:lmo-contour} compares the two feasible regions in an even larger hyperparameter search. The rows correspond to the $k$-sparse polytope (above) and $k$-support norm ball (below), respectively. The left column shows a heatmap of the test accuracy before pruning. While both approaches lead to well performing models for a wide range of hyperparameter configurations (indicated as the radius multiplier $w$ on the $x$-axis and $k$ on the $y$-axis), the $k$-support norm ball reaches higher results and converges properly for all configurations at stake. The $k$-sparse polytope approach fails to yield adequately trained dense models when the radius is relatively small but $k$ becomes larger, which is counter-intuitive, since larger $k$ allows a larger fraction of the parameters to be activated. The right column shows the corresponding heatmap of the test accuracy right after pruning. Clearly, the proposed approach is robust to pruning for a wider hyperparameter range.

\begin{figure}[h]
\centering
\centerline{\includegraphics[width=\columnwidth]{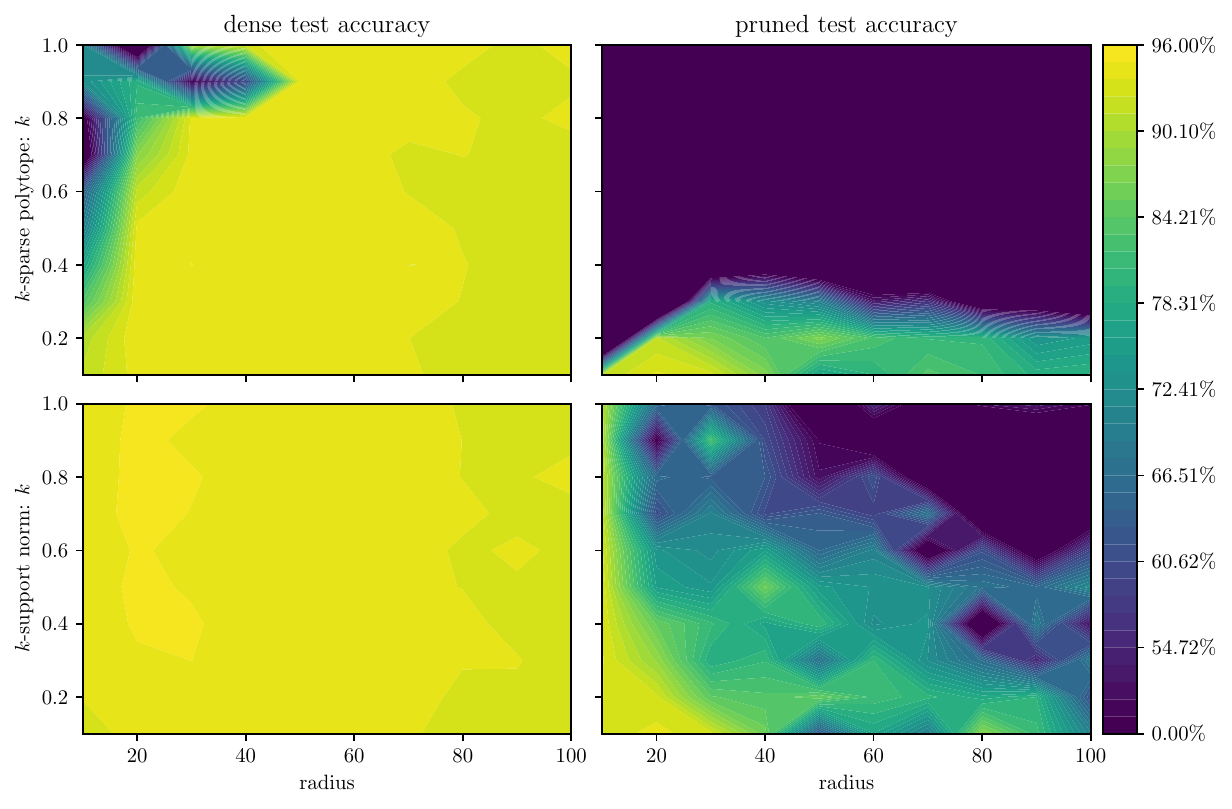}}
\caption{ResNet-18 on CIFAR-10: Contour plot when performing a large hyperparameter search over the radius and $k$ of the feasible regions, where the first row corresponds to the $k$-sparse polytope and the second one corresponds to the $k$-support norm ball. The left column shows the test accuracy before pruning, while the right column shows the test accuracy after pruning. The $k$-support norm approach leads to better performing dense models given the hyperparameter search at stake, which in turn are more stable to pruning.}
\label{fig:lmo-contour}
\end{figure}

\citet{Miao2022} showed that SFW (with $k$-sparse polytope constraints) outperforms SGD with weight decay, which in turn clearly, and unsurprisingly, outperforms the SFW-based approach when it is allowed to retrain. 
Our experiments indicate that while being less robust to pruning, SGD is able to reach on-par or better results after retraining, even when SFW is allowed to be retrained for the same amount of time. Leaving the domain of compression-aware training, this raises a more general question: in the case that retraining is not prohibited, is it beneficial to aim for robustness at pruning when trying to maximize the post-retraining accuracy?

\cref{fig:sgd-contour} illustrates an experiment where we investigate this exact question by performing One-Shot IMP \citep{Han2015} to a sparsity of 95\% and retraining for 10 epochs using LLR \citep{Zimmer2021}. We tuned both the weight decay for regular retraining as well as the weight decay for the retraining phase. Non surprisingly, there is a weight decay sweet spot when it comes to maximizing the pre-pruning accuracy (left). The middle plot shows that higher weight decay typically leads to more robustness to pruning, however a too large weight decay hinders convergence of the dense model and might lower the performance after pruning. Surprisingly however, as depicted in the right plot showing the test accuracy after retraining, the optimal parameter configuration is the one that leads to the highest accuracy before pruning, which is also the least robust to pruning. This aligns with previous findings of \citet{Bartoldson2020}, who question the strive for pruning stability when retraining is not prohibitive.

\begin{figure}[h]
\centering
\includegraphics[width=\columnwidth]{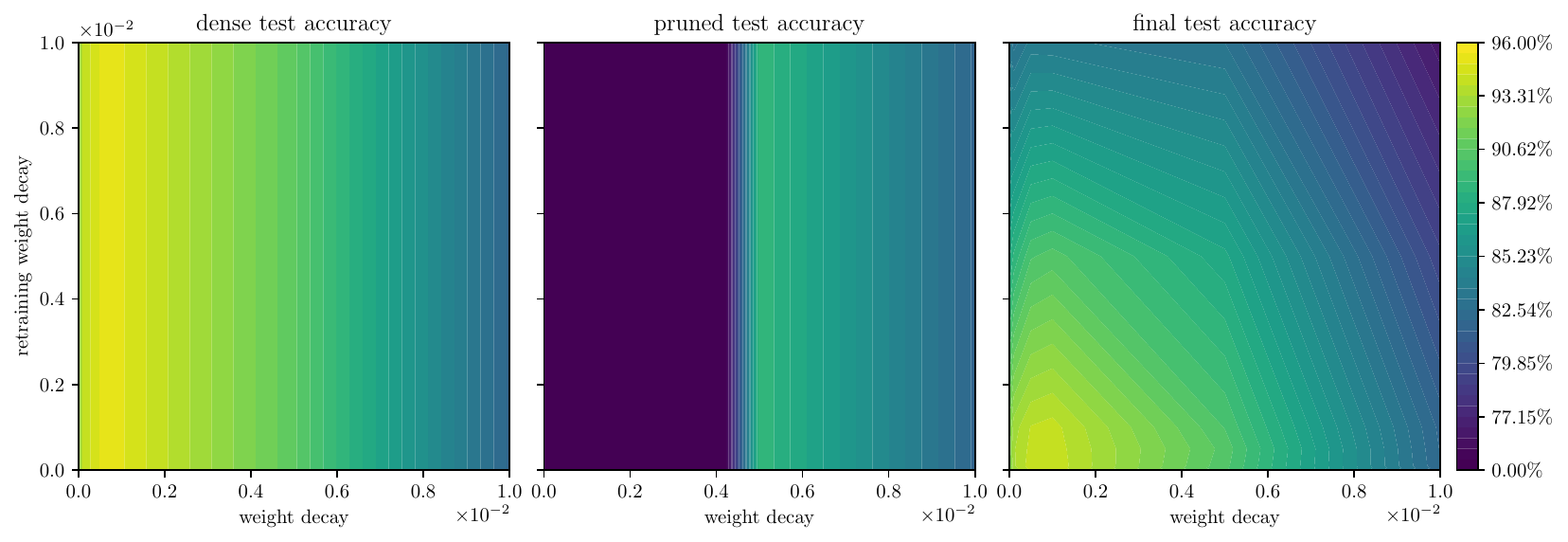}
\caption{ResNet-18 on CIFAR-10: Test accuracy heatmap before pruning (left), after pruning (middle) and after retraining (right) when training SGD and applying One-Shot pruning, tuning both the weight decay during training ($x$-axis) as well as during retraining ($y$-axis).}
\label{fig:sgd-contour}
\end{figure}

\FloatBarrier
\subsubsection{Low-Rank Matrix decomposition}\label{subsec:app-lowrankresults}\FloatBarrier
\paragraph{CIFAR-10 Hyperparameter grids}
If not specified otherwise, we use weight decay values of $\{\textrm{1e-4, 5e-4}\}$ for all algorithms.
\begin{itemize}
    \item SparseFW: We tune the fractional $k \in \{0.1, 0.15, 0.2, 0.25, 0.3\}$ and the multiplier $w \in \{20, 30, 50\}$ of the $L_2$-diameter.
    \item NUC: We tune the nuclear norm penalty factor $\lambda \in \{\textrm{1e-5, 5e-5, 1e-4, 5e-4, 1e-3, 8e-3, 5e-3, 1e-2, 5e-2}\}$.
    \item SVDEnergy: We tune the nuclear norm thresholding $\lambda \in \{\textrm{1e-2, 5e-2, 1e-1, 3e-1, 5e-1, 7e-1, 9e-1, 1e-0, 5e-0}\}$.
    \item FR: We tune the force regularization penalty factor $\lambda \in \{\textrm{1e-5, 5e-5, 1e-4, 5e-4, 1e-3, 8e-3, 5e-3, 1e-2, 5e-2}\}$.
   	\item TRP: We tune the nuclear norm penalty factor $\lambda \in \{\textrm{0, 1e-4, 5e-4, 1e-3}\}$ and use singular value threshold values of $\{\textrm{2e-2,5e-2}\}$. The reparameterization using the truncated SVD is applied after each epoch except the last.
\end{itemize}

\paragraph{CIFAR-100 Hyperparameter grids}
If not specified otherwise, we use weight decay values of $\{\textrm{1e-4, 5e-4}\}$ for all algorithms.
\begin{itemize}
    \item SparseFW: We tune the fractional $k \in \{0.1, 0.15, 0.2, 0.25, 0.3\}$ and the multiplier $w \in \{20, 30, 50\}$ of the $L_2$-diameter.
    \item NUC: We tune the nuclear norm penalty factor $\lambda \in \{\textrm{1e-6, 5e-6, 1e-5, 5e-5, 1e-4, 5e-4, 1e-3, 5e-3, 1e-2, 5e-2}\}$.
    \item SVDEnergy: We tune the nuclear norm thresholding $\lambda \in \{\textrm{1e-2, 5e-2, 1e-1, 3e-1, 5e-1, 7e-1, 9e-1, 1e-0, 5e-0}\}$ and varied the weight decay in $\{\textrm{1e-4, 2e-4, 5e-4}\}$.
    \item FR: We tune the force regularization penalty factor $\lambda \in \{\textrm{1e-5, 5e-5, 1e-4, 5e-4, 1e-3, 8e-3, 5e-3, 1e-2, 5e-2}\}$.
   	\item TRP: We tune the nuclear norm penalty factor $\lambda \in \{\textrm{1e-5, 5e-5, 1e-4, 5e-4, 1e-3, 5e-3}\}$ and use singular value threshold values of $\{\textrm{1e-1,2e-1}\}$. The reparameterization using the truncated SVD is applied after each epoch except the last.
\end{itemize}

\paragraph{TinyImageNet Hyperparameter grids}
If not specified otherwise, we use weight decay values of $\{\textrm{1e-4, 5e-4}\}$ for all algorithms.
\begin{itemize}
    \item SparseFW: We tune the fractional $k \in \{0.1, 0.15, 0.2, 0.25, 0.3\}$ and the multiplier $w \in \{20, 30, 50\}$ of the $L_2$-diameter.
    \item NUC: We tune the nuclear norm penalty factor $\lambda \in \{\textrm{1e-6, 5e-6, 1e-5, 5e-5, 1e-4, 5e-4, 1e-3, 5e-3}\}$.
    \item SVDEnergy: We tune the nuclear norm thresholding $\lambda \in \{\textrm{1e-2, 5e-2, 1e-1, 5e-1, 1e-0, 5e-0}\}$.
    \item FR: We tune the force regularization penalty factor $\lambda \in \{\textrm{1e-5, 5e-5, 1e-4, 5e-4, 1e-3, 8e-3, 5e-3, 1e-2, 5e-2}\}$.
   	\item TRP: We tune the nuclear norm penalty factor $\lambda \in \{\textrm{0, 1e-5, 5e-5, 1e-4, 5e-4}\}$ and use singular value threshold values of $\{\textrm{1e-1,2e-1}\}$. The reparameterization using the truncated SVD is applied after each epoch except the last.
\end{itemize}

\paragraph{ImageNet Hyperparameter grids}
If not specified otherwise, we use weight decay values of $\{\textrm{1e-4}\}$ for all algorithms.
\begin{itemize}
    \item SparseFW: We tune the fractional $k \in \{0.1, 0.2, 0.3\}$ and the multiplier $w \in \{20, 30, 50\}$ of the $L_2$-diameter.
    \item NUC: We tune the nuclear norm penalty factor $\lambda \in \{\textrm{1e-5, 5e-5, 1e-4, 5e-4, 1e-3, 5e-3}\}$.
    \item SVDEnergy: We tune the nuclear norm thresholding $\lambda \in \{\textrm{1e-2, 5e-2, 1e-1, 5e-1, 1e-0, 5e-0}\}$.
    \item FR: We tune the force regularization penalty factor $\lambda \in \{\textrm{1e-5, 5e-5, 1e-4, 5e-4, 1e-3, 8e-3, 5e-3, 1e-2}\}$.
   	\item TRP: We tune the nuclear norm penalty factor $\lambda \in \{\textrm{0, 1e-4, 5e-4, 1e-3, 5e-3}\}$ and use singular value threshold values of $\{\textrm{1e-1,2e-1}\}$. The reparameterization using the truncated SVD is applied after each epoch except the last.
\end{itemize}

\paragraph{CityScapes Hyperparameter grids}
If not specified otherwise, we use weight decay values of $\{\textrm{1e-4, 1e-5}\}$ for all algorithms.
\begin{itemize}
    \item SparseFW: We tune the fractional $k \in \{0.1, 0.2, 0.3\}$ and the multiplier $w \in \{20, 30, 50\}$ of the $L_2$-diameter.
    \item NUC: We tune the nuclear norm penalty factor $\lambda \in \{\textrm{1e-5, 5e-5, 1e-4, 5e-4, 1e-3, 5e-3}\}$.
    \item SVDEnergy: We tune the nuclear norm thresholding $\lambda \in \{\textrm{1e-6, 5e-6, 1e-5, 5e-5, 1e-2, 5e-2, 1e-1, 5e-1, 1e-0, 5e-0}\}$.
    \item FR: We tune the force regularization penalty factor $\lambda \in \{\textrm{1e-5, 5e-5, 1e-4, 5e-4, 1e-3, 8e-3, 5e-3, 1e-2}\}$.
   	\item TRP: We tune the nuclear norm penalty factor $\lambda \in \{\textrm{0, 1e-5, 5e-5, 1e-4, 5e-4}\}$ and use singular value threshold values of $\{\textrm{1e-1,2e-1}\}$. The reparameterization using the truncated SVD is applied after each epoch except the last.
\end{itemize}

\begin{table}
\caption{ResNet-18 on CIFAR-10: Comparison of approaches for encouraging low-rank matrices throughout training. The second column indicates the images-per-second throughput throughout training, where higher throughput corresponds to higher efficiency.}
\centering
\begin{tabular}{ll llllll}
\toprule
 & \multicolumn{7}{c}{\textbf{Sparsity}}\\
\cmidrule{3-8}
    \textbf{Method} & \textbf{\# img/s} & 40\% & 50\% & 60\% & 70\% & 80\% & 90\% \\
\midrule
Baseline  & 5664 & 93.19 \small{\textpm 0.23} & 93.02 \small{\textpm 0.13} & 91.66 \small{\textpm 0.17} & 89.95 \small{\textpm 0.14} & 82.07 \small{\textpm 1.41} & 53.07 \small{\textpm 1.98} \\
\midrule
\textbf{SparseFW} & 1156 & 92.19 \small{\textpm 0.11} & 92.12 \small{\textpm 0.21} & 92.14 \small{\textpm 0.23} & 91.96 \small{\textpm 0.22} & 90.72 \small{\textpm 0.05} & 74.94 \small{\textpm 2.40} \\
\textbf{NUC} & 566 & 92.56 \small{\textpm 0.18} & 92.48 \small{\textpm 0.27} & 92.59 \small{\textpm 0.17} & 92.45 \small{\textpm 0.33} & 89.82 \small{\textpm 0.07} & 64.83 \small{\textpm 1.16} \\
\textbf{SVDEnergy} & 493 & 92.75 \small{\textpm 0.81} & 92.62 \small{\textpm 0.64} & 92.48 \small{\textpm 0.30} & 91.68 \small{\textpm 0.42} & 87.99 \small{\textpm 2.18} & 65.23 \small{\textpm 8.99} \\
\textbf{FR} & 4397 & 94.75 \small{\textpm 0.03} & 94.46 \small{\textpm 0.01} & 94.02 \small{\textpm 0.05} & 92.54 \small{\textpm 0.32} & 85.39 \small{\textpm 0.43} & 56.94 \small{\textpm 4.11} \\
\textbf{TRP} & 565 & 92.59 \small{\textpm 0.47} & 92.68 \small{\textpm 0.37} & 92.59 \small{\textpm 0.38} & 92.01 \small{\textpm 0.52} & 89.05 \small{\textpm 1.97} & 61.85 \small{\textpm 1.77} \\
\bottomrule
\end{tabular}
\end{table}

\begin{table}
\caption{WideResNet on CIFAR-100: Comparison of approaches for encouraging low-rank matrices throughout training. The second column indicates the images-per-second throughput throughout training, where higher throughput corresponds to higher efficiency.}
\centering
\begin{tabular}{ll llllll}
\toprule
 & \multicolumn{7}{c}{\textbf{Sparsity}}\\
\cmidrule{3-8}
    \textbf{Method} & \textbf{\# img/s} & 30 \% & 40\% & 50\% & 60\% & 70\% & 80\% \\
\midrule
Baseline  & 1197 & 75.57 \small{\textpm 0.10} & 74.28 \small{\textpm 2.59} & 73.52 \small{\textpm 0.95} & 64.67 \small{\textpm 11.12} & 56.04 \small{\textpm 2.24} & 10.97 \small{\textpm 8.29} \\
\midrule
\textbf{SparseFW}  & 372 & 75.53 \small{\textpm 0.18} & 75.69 \small{\textpm 0.04} & 75.75 \small{\textpm 0.08} & 75.37 \small{\textpm 0.40} & 75.30 \small{\textpm 0.10} & 73.28 \small{\textpm 2.02} \\
\textbf{NUC} & 204 & 75.96 \small{\textpm 0.29} & 74.86 \small{\textpm 1.53} & 72.35 \small{\textpm 0.48} & 68.57 \small{\textpm 4.95} & 55.97 \small{\textpm 2.80} & 5.40 \small{\textpm 2.12} \\
\textbf{SVDEnergy} & 174 & 75.69 \small{\textpm 0.95} & 75.14 \small{\textpm 0.18} & 74.11 \small{\textpm 0.81} & 66.76 \small{\textpm 4.41} & 55.40 \small{\textpm 3.04} & 30.82 \small{\textpm 26.87} \\
\textbf{FR} & 1140 & 73.14 \small{\textpm 1.17} & 70.52 \small{\textpm 0.58} & 69.69 \small{\textpm 0.18} & 60.52 \small{\textpm 3.34} & 46.86 \small{\textpm 9.03} & 5.30 \small{\textpm 3.22} \\
\textbf{TRP} & 199 & 74.34 \small{\textpm 0.62} & 72.80 \small{\textpm 0.29} & 71.40 \small{\textpm 0.95} & 64.96 \small{\textpm 1.77} & 47.74 \small{\textpm 8.07} & 12.56 \small{\textpm 1.25} \\

\bottomrule
\end{tabular}
\end{table}

\begin{table}
\caption{ResNet-50 on TinyImagenet: Comparison of approaches for encouraging low-rank matrices throughout training. The second column indicates the images-per-second throughput throughout training, where higher throughput corresponds to higher efficiency.}
\centering
\begin{tabular}{ll lllll}
\toprule
 & \multicolumn{6}{c}{\textbf{Sparsity}}\\
\cmidrule{3-7}
    \textbf{Method} & \textbf{\# img/s} & 70 \% & 75\% & 80\% & 85\% & 90\%\\
\midrule
Baseline  & 756 & 62.90 \small{\textpm 0.66} & 61.17 \small{\textpm 1.41} & 56.53 \small{\textpm 1.02} & 46.01 \small{\textpm 1.76} & 19.94 \small{\textpm 0.21} \\
\midrule
\textbf{SparseFW} & 338 & 61.58 \small{\textpm 0.02} & 61.41 \small{\textpm 0.07} & 61.00 \small{\textpm 0.01} & 59.94 \small{\textpm 0.18} & 54.87 \small{\textpm 0.14} \\
\textbf{NUC} & 160 & 61.98 \small{\textpm 1.05} & 61.11 \small{\textpm 1.18} & 58.44 \small{\textpm 1.19} & 50.89 \small{\textpm 1.11} & 26.20 \small{\textpm 2.40} \\
\textbf{SVDEnergy} & 98 & 60.94 \small{\textpm 0.46} & 59.91 \small{\textpm 0.45} & 58.25 \small{\textpm 0.83} & 51.59 \small{\textpm 0.67} & 27.34 \small{\textpm 7.00} \\
\textbf{FR} & 742 & 63.14 \small{\textpm 1.24} & 61.61 \small{\textpm 1.15} & 58.07 \small{\textpm 1.73} & 47.24 \small{\textpm 0.88} & 22.98 \small{\textpm 1.60} \\
\textbf{TRP} & 159 & 61.72 \small{\textpm 0.43} & 60.24 \small{\textpm 0.71} & 58.01 \small{\textpm 0.01} & 50.02 \small{\textpm 1.03} & 27.32 \small{\textpm 2.49} \\

\bottomrule
\end{tabular}
\end{table}

\begin{table}
\caption{ResNet-50 on Imagenet: Comparison of approaches for encouraging low-rank matrices throughout training. The second column indicates the images-per-second throughput throughout training, where higher throughput corresponds to higher efficiency.}
\centering
\begin{tabular}{ll lllll}
\toprule
 & \multicolumn{6}{c}{\textbf{Sparsity}}\\
\cmidrule{3-7}
    \textbf{Method} & \textbf{\# img/s} & 70 \% & 75\% & 80\% & 85\% & 90\%\\
\midrule
Baseline  & 1386 & 73.88 \small{\textpm 0.13} & 70.22 \small{\textpm 0.09} & 60.72 \small{\textpm 0.99} & 37.44 \small{\textpm 1.82} & 1.88 \small{\textpm 0.19} \\
\midrule
\textbf{SparseFW} & 796 & 73.27 \small{\textpm 0.18} & 72.72 \small{\textpm 0.27} & 71.39 \small{\textpm 0.01} & 65.18 \small{\textpm 0.28} & 26.17 \small{\textpm 1.11} \\
\textbf{NUC} & 741 & 74.91 \small{\textpm 0.01} & 74.29 \small{\textpm 0.02} & 71.16 \small{\textpm 1.12} & 57.21 \small{\textpm 1.53} & 4.60 \small{\textpm 2.66} \\
\textbf{SVDEnergy} & 621 & 74.74 \small{\textpm 0.01} & 74.12 \small{\textpm 0.06} & 69.50 \small{\textpm 0.61} & 53.55 \small{\textpm 0.98} & 3.19 \small{\textpm 1.09} \\
\textbf{FR} & 1418 & 71.87 \small{\textpm 0.45} & 67.93 \small{\textpm 0.19} & 55.98 \small{\textpm 0.05} & 30.80 \small{\textpm 0.66} & 2.10 \small{\textpm 0.20} \\
\textbf{TRP} & 740 & 74.91 \small{\textpm 0.08} & 74.33 \small{\textpm 0.21} & 70.25 \small{\textpm 1.95} & 55.21 \small{\textpm 2.53} & 4.62 \small{\textpm 2.52} \\

\bottomrule
\end{tabular}
\end{table}

\begin{table}
\caption{PSPNet on CityScapes: Comparison of approaches for encouraging low-rank matrices throughout training. The second column indicates the images-per-second throughput throughout training, where higher throughput corresponds to higher efficiency.}
\centering
\begin{tabular}{ll llll}
\toprule
 & \multicolumn{5}{c}{\textbf{Sparsity}}\\
\cmidrule{3-6}
    \textbf{Method} & \textbf{\# img/s} & 20 \% & 30\% & 40\% & 50\% \\
\midrule
Baseline  & 38 & 52.47 \small{\textpm 0.14} & 47.31 \small{\textpm 0.67} & 42.95 \small{\textpm 0.48} & 35.56 \small{\textpm 0.51} \\
\midrule
\textbf{SparseFW} & 11 & 52.95 \small{\textpm 0.45} & 49.70 \small{\textpm 0.78} & 46.63 \small{\textpm 0.61} & 38.11 \small{\textpm 0.28} \\
\textbf{NUC} & 7 & 51.68 \small{\textpm 0.12} & 47.67 \small{\textpm 0.03} & 42.40 \small{\textpm 0.42} & 37.28 \small{\textpm 0.25} \\
\textbf{SVDEnergy} & 5 & 51.76 \small{\textpm 0.35} & 46.82 \small{\textpm 0.28} & 42.81 \small{\textpm 0.33} & 36.47 \small{\textpm 0.09} \\
\textbf{FR} & 37 & 51.67 \small{\textpm 0.10} & 46.61 \small{\textpm 0.29} & 41.60 \small{\textpm 0.34} & 35.29 \small{\textpm 0.11} \\
\textbf{TRP} & 7 & 51.82 \small{\textpm 0.91} & 47.96 \small{\textpm 1.32} & 42.18 \small{\textpm 0.22} & 37.54 \small{\textpm 0.43} \\

\bottomrule
\end{tabular}

\end{table}

\FloatBarrier
\subsection{The dynamics of gradient rescaling}\FloatBarrier

\begin{figure}[h]
\centering
\includegraphics[width=.5\linewidth]{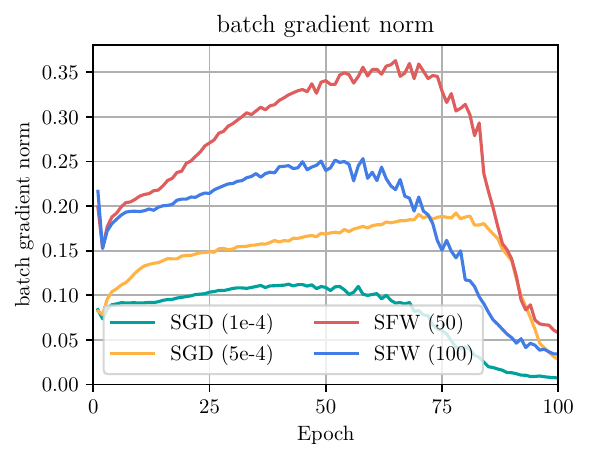}
\caption{ResNet-18 on CIFAR-10: The evolution of the batch gradient norm $\norm{\bg}$ when training SFW for different values of $k$ and SGD for two weight decay strengths. The metric is averaged with respect to two random seeds and over all iterations within one epoch.}
\label{fig:gradrescaling}
\end{figure}

\FloatBarrier
\subsection{Proofs of LMO constructions}\FloatBarrier
In the following, we state the missing proof of the $k$-support norm LMO (being a special case of the group-$k$-support norm) and \cref{lem:spectralksupportlemma}.
\begin{lemma}\label{lem:ksupportnormlemma} Given $\bg$, let $v_t \in \C_k(\tau) = \conv\set{v\ |\ \norm{v}_0 \leq k, \norm{v}_2 \leq \tau}$ such that 
 \begin{equation*}
 [v_t]_i = \begin{cases} -\tau [\bg]_i/\norm{\bg^{\topk}}_2  &\mbox{if } i \in \topk(|\bg|), \\ 
	0 & \mbox{otherwise}, \end{cases}
\end{equation*}
where $\bg^{\topk}$ is the vector obtained by setting to zero all $n-k$ entries $[\bg]_j$ of $\bg$ with $j\not \in \topk(|\bg|)$. Then $v_t \in \argmin_{v \in \C_k(\tau)} \langle v,  \bg\rangle$ is a solution to \cref{eq:LMO}. 
\end{lemma}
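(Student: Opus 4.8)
The plan is to exploit the atomic structure of $\C_k(\tau)$ and split the linear minimization into two stages: optimizing the magnitudes on a fixed support, and then optimizing over the choice of support. First I would invoke the fact recalled above that the minimum of a linear functional over an atomic domain is attained at one of its vertices \citep{Jaggi2013}, so that
\[
 \min_{v \in \C_k(\tau)} \langle v, \bg \rangle \;=\; \min_{\norm{v}_0 \le k,\ \norm{v}_2 \le \tau} \langle v, \bg \rangle .
\]

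Next, fix a support $S \subseteq \set{1, \dots, n}$ with $|S| \le k$ and consider feasible vectors $v$ supported on $S$. Writing $\bg_S$ for the restriction of $\bg$ to the coordinates in $S$, Cauchy--Schwarz gives $\langle v, \bg \rangle = \langle v, \bg_S \rangle \ge -\norm{v}_2\,\norm{\bg_S}_2 \ge -\tau\,\norm{\bg_S}_2$, with equality exactly when $v = -\tau\,\bg_S/\norm{\bg_S}_2$ (if $\bg_S = 0$ the value is $0$ for every such $v$). Hence the best value achievable on support $S$ is $-\tau\,\norm{\bg_S}_2$, and the outer minimization over $S$ reduces to maximizing $\norm{\bg_S}_2^2 = \sum_{i \in S}[\bg]_i^2$ over all $S$ with $|S| \le k$. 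Since this is a sum of $|S|$ nonnegative terms, it is maximized by choosing $S = \topk(|\bg|)$, the index set of the $k$ largest entries of $|\bg|$; any tie-breaking rule yields the same optimal value.

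Combining the two stages, $v_t = -\tau\,\bg^{\topk}/\norm{\bg^{\topk}}_2$ is optimal, and it lies in $\C_k(\tau)$ because it is $k$-sparse with $\norm{v_t}_2 = \tau$ (indeed it is a vertex of $\C_k(\tau)$). This is precisely the stated formula, since $[v_t]_i = -\tau [\bg]_i/\norm{\bg^{\topk}}_2$ for $i \in \topk(|\bg|)$ and $[v_t]_i = 0$ otherwise. I do not expect a genuine obstacle here; the only points deserving a sentence of care are the reduction from the convex hull to its vertices and the degenerate case $\bg^{\topk} = 0$ (the gradient vanishes on the top-$k$ block), where the LMO value is $0$ and any feasible vertex --- in particular the one obtained from the formula with the convention $0/0 = 0$ --- is optimal.
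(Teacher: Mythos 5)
Your proof is correct, and it takes a cleaner route than the paper's. The paper also reduces the LMO to the generating set $\set{v : \norm{v}_0 \le k,\ \norm{v}_2 \le \tau}$, but from there it argues geometrically: it writes $\langle v, \bg\rangle = \tau\norm{\bg}_2\cos(\angle(v,\bg))$ and asserts that the minimizer is the $k$-sparse vector on the sphere of radius $\tau$ that is ``closest'' to $-\tau\bg/\norm{\bg}_2$, identifying this with the top-$k$ truncation. That last step is left at the level of geometric intuition — maximizing the angle, minimizing a distance, and truncating to the top-$k$ support are conflated without an explicit verification that they coincide. Your two-stage decomposition supplies exactly the missing justification: for a fixed support $S$ with $|S|\le k$, Cauchy--Schwarz (which is equivalent to the paper's cosine identity, but applied to $\bg_S$ rather than $\bg$) pins down the optimal value $-\tau\norm{\bg_S}_2$ and the optimal direction, and the outer maximization of $\sum_{i\in S}[\bg]_i^2$ over supports is then a trivial greedy selection that visibly yields $\topk(|\bg|)$. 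You also handle the degenerate case $\bg^{\topk}=0$, which the paper silently ignores. In short, both proofs share the same skeleton (vertices, then an inner-product/angle bound), but your explicit optimization over supports turns the paper's heuristic ``closest $k$-sparse vector'' claim into a rigorous argument, at no cost in length.
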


\begin{proof}
By construction, all vertices $v$ of $\C_k(\tau)$ satisfy $\norm{v}_2 = \tau$ and are $k$-Sparse, i.e., $\norm{v}_0 \leq k$. Note that being $k$-Sparse includes cases where more than $n-k$ entries are zero. The minimum of \cref{eq:LMO} is attained at one such $v$. Further recall the following reformulation of the euclidean inner product:
\begin{equation}
\langle v,  \bg\rangle = \norm{v}_2 \norm{\bg}_2 \cos(\angle(v,\bg)) = \tau \norm{\bg}_2 \cos(\angle(v,\bg)),
\end{equation}
where $\angle(v,\bg)$ denotes the angle between $v$ and $\bg$. This term is minimized as soon as the angle between $v$ and $\bg$ is maximal. If $v$ was not required to be $k$-Sparse, i.e., $v$ would be allowed to lie anywhere on the border of $B_2(\tau)$, the solution would clearly be given by $-\tau {\bg}/{\norm{\bg}_2}$. However, since $v$ is $k$-Sparse, the vector maximizing the angle to $\bg$ is the one that is closest to $-\tau {\bg}/{\norm{\bg}_2}$ but is $k$-Sparse at the same time. This is exactly the one claimed.
\end{proof}

\begin{lemma}Given $\bg \in \R^{n \times m}$, let $\mathcal W_t \in \C^{\sigma}_k(\tau)$ such that 
 \begin{equation*}
 \mathcal W_t = \frac{-\tau}{\norm{\sigma(\Sigma_k)}_2} U_k \Sigma_k V_k^T,
\end{equation*}
where $U_k \Sigma_k V_k^T$ is the truncated SVD of $\bg$ such that only the $k$ largest singular values are kept. Then $\mathcal W_t \in \argmin_{v \in \C^{\sigma}_k(\tau)} \langle v,  \bg\rangle$ is a solution to \cref{eq:LMO}. 
\end{lemma}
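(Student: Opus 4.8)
The plan is to follow the same scheme as the proof of \cref{lem:ksupportnormlemma}, transporting the vector geometry to its matrix-analytic analogue via von Neumann's trace inequality. First I would observe that by linearity of $v \mapsto \langle v, \bg\rangle$ the minimum in \cref{eq:LMO} over $\C^{\sigma}_{k}(\tau) = \conv\set{\mathcal W \mid \rank(\mathcal W)\le k,\ \norm{\sigma(\mathcal W)}_2 \le \tau}$ is attained on the generating set, and moreover (scaling any rank-one term up to Frobenius norm $\tau$ only decreases the objective) at some $\mathcal W$ with $\rank(\mathcal W)\le k$ and $\norm{\sigma(\mathcal W)}_2 = \norm{\mathcal W}_F = \tau$. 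So it suffices to lower-bound $\langle \mathcal W,\bg\rangle = \operatorname{tr}(\mathcal W^{\top}\bg)$ over all such $\mathcal W$ and to check that $\mathcal W_t$ meets the bound.

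For the lower bound I would invoke von Neumann's trace inequality in the form $\operatorname{tr}(A^{\top}B) \ge -\sum_i \sigma_i(A)\,\sigma_i(B)$, with singular values in decreasing order. Taking $A = \mathcal W$ (at most $k$ nonzero singular values) and $B = \bg$ gives $\langle \mathcal W, \bg\rangle \ge -\sum_{i=1}^{k}\sigma_i(\mathcal W)\,\sigma_i(\bg)$, and Cauchy--Schwarz together with $\sum_{i=1}^{k}\sigma_i(\mathcal W)^2 = \tau^2$ yields $\sum_{i=1}^{k}\sigma_i(\mathcal W)\,\sigma_i(\bg) \le \tau\big(\sum_{i=1}^{k}\sigma_i(\bg)^2\big)^{1/2} = \tau\,\norm{\sigma(\Sigma_k)}_2$. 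Hence $\langle \mathcal W, \bg\rangle \ge -\tau\,\norm{\sigma(\Sigma_k)}_2$ for every feasible $\mathcal W$.

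It then remains to verify that $\mathcal W_t$ is feasible and attains this value. Writing $\bg = U\Sigma V^{\top}$ and using $U_k^{\top}U = [\,I_k\ \ 0\,]$ one gets $U_k^{\top}\bg = \Sigma_k V_k^{\top}$, so
\[
\langle \mathcal W_t, \bg\rangle = \frac{-\tau}{\norm{\sigma(\Sigma_k)}_2}\operatorname{tr}\!\big(V_k\Sigma_k U_k^{\top}\bg\big) = \frac{-\tau}{\norm{\sigma(\Sigma_k)}_2}\operatorname{tr}\!\big(\Sigma_k^2\big) = -\tau\,\norm{\sigma(\Sigma_k)}_2,
\]
which matches the bound; and since the singular values of $\mathcal W_t$ are $\tau\,\sigma_i(\bg)/\norm{\sigma(\Sigma_k)}_2$ for $i \le k$, it has rank $\le k$ and $\norm{\mathcal W_t}_F = \tau$, so $\mathcal W_t \in \C^{\sigma}_{k}(\tau)$. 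Therefore $\mathcal W_t$ solves \cref{eq:LMO}.

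The main obstacle is the equality analysis: the inequalities are simultaneously tight only when the singular subspaces of $\mathcal W$ align with those of $\bg$ (making von Neumann tight) and the singular value vector of $\mathcal W$ is proportional to $(\sigma_1(\bg),\dots,\sigma_k(\bg))$ (making Cauchy--Schwarz tight) --- precisely what $\mathcal W_t$ realizes, which is exactly the matrix-level counterpart of the optimal vector in \cref{lem:ksupportnormlemma} applied to $\sigma(\bg)$ (indeed one could alternatively argue that $\C^{\sigma}_{k}(\tau)$ is unitarily invariant, reducing the LMO to that vector problem, but this again rests on von Neumann's inequality). One should also dispatch the degenerate case $\bg = 0$, and more mildly $\rank(\bg) < k$: then $\norm{\sigma(\Sigma_k)}_2 = 0$, the objective is identically zero on $\C^{\sigma}_{k}(\tau)$, and any feasible point --- in particular the limiting interpretation of $\mathcal W_t$ --- is optimal.
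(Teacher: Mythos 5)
Your proof is correct, but it takes a genuinely different route from the paper's. The paper argues geometrically: it vectorizes the matrices, writes $\langle \mathcal W, \bg\rangle_F = \alpha \norm{\overleftarrow{\bg}}_2\cos(\angle(\overleftarrow{\mathcal W},\overleftarrow{\bg}))$, argues the minimizer must maximize the angle at $\alpha = \tau$, recasts that as minimizing $\norm{-\mathcal W - \bg}_F$ subject to $\rank(\mathcal W)\le k$, and invokes the Eckart--Young--Mirsky theorem to land on the truncated SVD --- exactly mirroring its proof of \cref{lem:ksupportnormlemma}. You instead produce a matching lower bound, $\langle \mathcal W, \bg\rangle \ge -\sum_{i=1}^{k}\sigma_i(\mathcal W)\sigma_i(\bg) \ge -\tau\norm{\sigma(\Sigma_k)}_2$, via von Neumann's trace inequality followed by Cauchy--Schwarz, and then verify by direct computation that $\mathcal W_t$ is feasible and attains it. Your approach buys rigor and transparency: it yields an explicit certificate of optimality, makes the two tightness conditions (aligned singular subspaces, proportional singular value vectors) visible, and cleanly dispatches the degenerate case $\rank(\bg) < k$ --- all points the paper's angle-maximization argument glosses over (in particular, the equivalence between ``maximal angle at fixed norm'' and ``minimal Frobenius distance to $-\bg$'' is asserted rather than justified there, and Eckart--Young--Mirsky solves an unconstrained-norm approximation problem, so the final ``rescale appropriately'' step is doing real work). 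The paper's approach buys brevity and a uniform narrative with the vector case; your observation that unitary invariance of $\C^{\sigma}_{k}(\tau)$ reduces the LMO to the vector problem of \cref{lem:ksupportnormlemma} is essentially the conceptual bridge between the two, and it too rests on von Neumann's inequality.
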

\begin{proof}
Recall that \begin{equation*}
 \C^{\sigma}_{k}(\tau) = \conv\set{\mathcal W \in \R^{n \times m}\ |\ \rank(\mathcal W) \leq k, \norm{\sigma(\mathcal W)}_2 \leq \tau}.
\end{equation*}
    Let $\mathcal W$ be some minimizer. Note that rescaling a matrix by a scalar has no effect on its rank. Let us hence assume that $\norm{\sigma(\mathcal W)}_2 = \alpha$ for some $\alpha > 0$ and characterize $\mathcal W \in \argmin_{\rank(v) \leq k} \langle v,  \bg\rangle$.
    Again, we have 
    \begin{equation}
        \langle \mathcal W,  \bg\rangle_F =  \langle \overleftarrow{\mathcal W},  \overleftarrow{\bg}\rangle_2 = \alpha \norm{\overleftarrow{\bg}}_2 \cos(\angle(\overleftarrow{\mathcal W},\overleftarrow{\bg})),
    \end{equation}
    where $\overleftarrow{x}$ is there vectorized form of matrix $x$. Since we can choose $\alpha \leq \tau$, this term is minimal as soon as the angle $\angle(\overleftarrow{\mathcal W},\overleftarrow{\bg})$ is maximal, i.e. $\cos(\angle(\overleftarrow{\mathcal W},\overleftarrow{\bg})) < 0 $ and $\alpha = \tau$, where we use the same euclidean-geometric interpretation as in the proof for \cref{lem:ksupportnormlemma} above. To obtain a maximal angle, we hence minimize the $L_2$-distance between $-\overleftarrow{\mathcal{W}}$ and $\overleftarrow{\bg}$ in compliance with the rank constraint. Since again $\norm{-\overleftarrow{\mathcal{W}} - \overleftarrow{\bg}}_2 = \norm{-\mathcal{W} - \bg}_F$, the Eckart–Young–Mirsky theorem yields the claim, where we rescale appropriately to meet the Schattennorm constraint.
\end{proof}

\subsection{Convergence of SFW with gradient rescaling}\label{subsec:convergenceproof}
Before proving the convergence of SFW with gradient rescaling as stated informally in \cref{thm:gradient_rescaling_convergence}, we first recall some central definitions and assumptions.
\subsubsection{Setting}
Let $\Omega$ be the set of training datapoints from which we sample uniformly at random. In \cref{eq:finit-sum-problem} we defined a unique loss function $\ell_i$ for each datapoint. In the following let $\ell(\theta, \omega_i) = \ell_i(\theta)$ for $\omega_i \in \Omega$. Similar to \citet{Reddi2016} and \citet{Pokutta2020}, we define the SFW algorithm as follows, where the output $\theta_a$ is chosen uniformly at random from all iterates $\theta_0, \ldots, \theta_{T-1}$. 
\begin{algorithm}[H]
\caption{Stochastic Frank--Wolfe (SFW)}
\label{alg:sfw-stoch}
\textbf{Input:} Initial parameters $\theta_0\in\mathcal{C}$, learning rate $\eta_t \in \left[0,1\right]$, batch size $b_t$, number of steps $T$.\\
\textbf{Output:} Iterate $\theta_a$ chosen uniformly at random from $\theta_0, \ldots, \theta_{T-1}$\\
\vspace{-4mm}
\begin{algorithmic}[1]
\FOR{$t=0$ \textbf{to} $T-1$}
\STATE sample i.i.d. $\omega_1^{(t)}, \ldots, \omega_{b_t}^{(t)} \in \Omega$ \label{line:batch_sample_stoch}
\STATE $\tilde{\nabla}L(\theta_t) \leftarrow \frac{1}{b_t} \sum_{j=1}^{b_t} \nabla \ell(\theta_t, \omega_j^{(t)})$ \label{line:gradient_stoch}
\STATE$v_t \leftarrow \argmin_{v\in\mathcal{C}}\,\langle \tilde{\nabla}L(\theta_t),v\rangle$\label{line:lmo_stoch}
\STATE$\theta_{t+1} \leftarrow \theta_t+\eta_t(v_t-\theta_t)$\label{line:update_stoch}
\ENDFOR\\
\end{algorithmic}
\end{algorithm}

Let us recall some definitions. We denote the globally optimal solution by $\theta^\star$ and the \emph{Frank--Wolfe Gap} at $\theta$ as
\begin{equation} \label{eq:fw-gap-2}
	\mathcal G(\theta) = \max_{v \in \mathcal C} \langle v-\theta, -\nabla L(\theta) \rangle.
\end{equation}
We will use the same assumptions as \citet{Reddi2016}.
First of all, let us assume that $L$ is $M$-smooth, that is
\begin{equation}
	\| \nabla L (x) - \nabla L (y) \| \leq M \|x - y\|
\end{equation}
for all $x,y \in \C$, which implies the well-known inequality
\begin{equation}
	L(x) \leq L(y) + \langle \nabla L(y), x-y \rangle + \frac{M}{2}	\| x-y \|^2.
\end{equation}
Further, we assume the function $\ell$ to be $G$-Lipschitz, that is for all $x \in \mathcal C$ and $\omega \in \Omega$ we have
\begin{equation}
	\| \nabla \ell (x, \omega) \| \leq G.
\end{equation}
A direct consequence is that the norm of the gradient estimator can be bounded as $\norm{\tilde{\nabla}L(\theta_t)} \leq G$.
\subsubsection{Convergence Proof}
The following well-established Lemma quantifies how closely $\tilde{\nabla} L(\theta)$ approximates $\nabla L(\theta)$. A proof can be found in \citet{Reddi2016}.

\begin{lemma} \label{lemma:gradient-approximation}
	Let $\omega_1, \ldots, \omega_b$ be i.i.d. samples in $\Omega$, $\theta \in \mathcal C$ and $\tilde{\nabla}L(\theta) = \frac{1}{b} \sum_{j=1}^{b} \nabla \ell(\theta_t, \omega_j)$. If $\ell$ is $G$-Lipschitz, then
	\begin{equation}
		\mathbb E \|\tilde{\nabla} L (\theta) - \nabla L(\theta) \|\leq \frac{G}{b^{1/2}}.
	\end{equation}
\end{lemma}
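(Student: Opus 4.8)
The plan is to reduce the statement to the textbook computation of the variance of a mini-batch mean of i.i.d.\ unbiased gradient estimators. First I would record that the finite-sum structure of \cref{eq:finit-sum-problem} makes each sampled gradient an \emph{unbiased} estimator of the full gradient: since $\ell(\theta,\omega_i)=\ell_i(\theta)$ and each $\omega_j$ is drawn uniformly from $\Omega$, we have $\E[\nabla\ell(\theta,\omega_j)]=\frac{1}{m}\sum_{i=1}^m\nabla\ell_i(\theta)=\nabla L(\theta)$ for every $j$. Consequently the centered variables $X_j\coloneqq\nabla\ell(\theta,\omega_j)-\nabla L(\theta)$ are i.i.d., zero-mean, and satisfy $\tilde{\nabla}L(\theta)-\nabla L(\theta)=\frac{1}{b}\sum_{j=1}^b X_j$.

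Next I would pass to the second moment via Jensen's inequality, $\E\norm{\,\cdot\,}\le\bigl(\E\norm{\,\cdot\,}^2\bigr)^{1/2}$, so that it suffices to establish $\E\norm{\frac{1}{b}\sum_{j=1}^b X_j}^2\le G^2/b$. Expanding the squared norm as $\frac{1}{b^2}\sum_{j,k}\E\langle X_j,X_k\rangle$ and using independence together with $\E[X_j]=0$, every off-diagonal term factors as $\langle\E[X_j],\E[X_k]\rangle=0$ and vanishes, leaving $\frac{1}{b^2}\sum_{j=1}^b\E\norm{X_j}^2=\frac{1}{b}\E\norm{X_1}^2$ by identical distribution.

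The remaining ingredient is the per-sample variance bound. I would use the decomposition $\E\norm{X_1}^2=\E\norm{\nabla\ell(\theta,\omega_1)}^2-\norm{\nabla L(\theta)}^2\le\E\norm{\nabla\ell(\theta,\omega_1)}^2$, and then invoke the $G$-Lipschitz assumption, which gives $\norm{\nabla\ell(\theta,\omega)}\le G$ pointwise and hence $\E\norm{X_1}^2\le G^2$. Combining the three steps yields $\E\norm{\tilde{\nabla}L(\theta)-\nabla L(\theta)}^2\le G^2/b$, and the Jensen reduction delivers the claimed bound $G/b^{1/2}$.

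There is no genuine obstacle here: this is the standard fact that the variance of a mini-batch mean scales like $1/b$. The only points deserving a little care are confirming unbiasedness directly from the specific finite-sum setup rather than assuming it, and being explicit that it is independence \emph{plus} vanishing mean that annihilates the cross terms; the bounded-gradient consequence of Lipschitzness is already recorded in the preliminaries as $\norm{\tilde{\nabla}L(\theta_t)}\le G$.
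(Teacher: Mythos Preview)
Your argument is correct and is the standard variance-of-a-mini-batch-mean computation. Note, however, that the paper does not actually supply its own proof of this lemma: it simply states the result and writes ``A proof can be found in \citet{Reddi2016}.'' So there is nothing in the paper to compare against beyond the citation; your write-up is exactly the kind of elementary derivation one would expect behind that reference.
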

In the following, we denote the gradient estimator at iteration $t$ as $\nabla_t \coloneqq \tilde{\nabla}L(\theta_t)$ and the $L_2$-diameter $\D(\C)$ as $\D$. Let $\beta \in \mathbb{R}$ satisfy
\begin{equation}
	\beta \geq \frac{2h(\theta_0)}{MD^2},
\end{equation}
for some given initialization $\theta_0 \in \mathcal C$ of the parameters, where $h(\theta_0) = L(\theta_0) - L(\theta^\star)$ denotes the optimality gap of $\theta_0$.

\begin{theorem}
	For all $0 \leq t < T$, let $b_t = b = T$ and $\eta_t = \norm{\nabla_t}\eta$ where $\eta = \left( \frac{h(\theta_0)}{TMD^2G^2 \beta} \right)^{1/2}$.
 If $\theta_a$ is chosen uniformly at random from the SFW iterates $\{\theta_i : 0 \leq i < T \}$, then we have 
	\begin{equation*}
		\E\, \lbrack\mathcal G(\theta_a)\cdot \norm{\nabla L(\theta_a)}\rbrack \leq \frac{D}{\sqrt T} \left( \sqrt{h(\theta_0)MG^2\beta} + G^2 + \frac{MGD}{2\sqrt{2}}\right),
	\end{equation*}
	where $\mathbb E$ denotes the expectation w.r.t. all the randomness present.
\end{theorem}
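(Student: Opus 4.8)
The plan is to run the standard Frank--Wolfe descent argument but to treat the gradient-rescaled step size $\eta_t = \eta\norm{\nabla_t}$ by conditioning on $\theta_t$ and invoking Jensen's inequality; this is exactly what produces the \emph{product} $\mathcal G(\theta_a)\cdot\norm{\nabla L(\theta_a)}$ in the criterion. First I would apply $M$-smoothness to the update $\theta_{t+1}=\theta_t+\eta_t(v_t-\theta_t)$ to get
\[
  L(\theta_{t+1}) \le L(\theta_t) + \eta_t\langle\nabla L(\theta_t),v_t-\theta_t\rangle + \frac{M}{2}\eta_t^2\norm{v_t-\theta_t}^2,
\]
and bound $\norm{v_t-\theta_t}\le D$. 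For the linear term, write $e_t \define \nabla L(\theta_t)-\nabla_t$ and let $u_t$ attain the Frank--Wolfe gap at the \emph{true} gradient, so $\langle\nabla L(\theta_t),u_t-\theta_t\rangle=-\mathcal G(\theta_t)$. Since $v_t$ is the LMO minimizer of $\langle\nabla_t,\cdot\rangle$ over $\C$, we have $\langle\nabla_t,v_t-\theta_t\rangle\le\langle\nabla_t,u_t-\theta_t\rangle$; adding and subtracting $\nabla L(\theta_t)$ and using Cauchy--Schwarz with the diameter gives $\langle\nabla L(\theta_t),v_t-\theta_t\rangle\le -\mathcal G(\theta_t)+D\norm{e_t}$.

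The crux is the next step. Multiply the previous bound by $\eta_t=\eta\norm{\nabla_t}\ge0$ and take the conditional expectation given $\theta_t$. Because $\mathcal G(\theta_t)$ is $\theta_t$-measurable and $\E[\norm{\nabla_t}\mid\theta_t]\ge\norm{\E[\nabla_t\mid\theta_t]}=\norm{\nabla L(\theta_t)}$ by Jensen (the batch is i.i.d.\ with mean $\nabla L(\theta_t)$), the term $-\eta\,\E[\norm{\nabla_t}\mathcal G(\theta_t)\mid\theta_t]$ is at most $-\eta\norm{\nabla L(\theta_t)}\mathcal G(\theta_t)$ with \emph{no} added error. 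The remaining pieces are routine, using $G$-Lipschitzness ($\norm{\nabla_t}\le G$), \cref{lemma:gradient-approximation}, and the pointwise bound $\norm{\nabla_t}\le G$ to handle the correlation between $\eta_t$ and $\nabla_t$: $\E[\norm{\nabla_t}\norm{e_t}\mid\theta_t]\le G\,\E[\norm{e_t}\mid\theta_t]\le G^2/\sqrt b$ and $\E[\eta_t^2\mid\theta_t]=\eta^2\E[\norm{\nabla_t}^2\mid\theta_t]\le\eta^2 G^2$. Combining and taking full expectations yields
\[
  \eta\,\E\!\big[\norm{\nabla L(\theta_t)}\mathcal G(\theta_t)\big] \le \E[L(\theta_t)]-\E[L(\theta_{t+1})] + \frac{\eta D G^2}{\sqrt b} + \frac{M\eta^2 G^2 D^2}{2}.
\]

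Finally I would telescope over $t=0,\dots,T-1$, use $\E[L(\theta_T)]\ge L(\theta^\star)$ so the differences sum to at most $h(\theta_0)$, set $b=T$, and divide by $\eta T$; since $\theta_a$ is uniform over the iterates, the left side becomes $\E[\norm{\nabla L(\theta_a)}\mathcal G(\theta_a)]$, bounded by $\frac{h(\theta_0)}{\eta T}+\frac{DG^2}{\sqrt T}+\frac{M\eta G^2D^2}{2}$. Substituting the stated $\eta$ turns the first term into $\frac{D}{\sqrt T}\sqrt{h(\theta_0)MG^2\beta}$, and the assumption $\beta\ge 2h(\theta_0)/(MD^2)$ is precisely what bounds the third term by $\frac{D}{\sqrt T}\cdot\frac{MGD}{2\sqrt2}$, giving the claim. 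The main obstacle is exactly the gradient-dependent step size: one cannot treat $\eta_t$ as a deterministic constant, and the clean resolution is the conditional Jensen step, which also explains why the natural convergence criterion here is the product rather than the bare Frank--Wolfe gap; the secondary subtlety is the correlation of $\eta_t$ with $\nabla_t$ in the error and quadratic terms, which forces the use of the uniform bound $\norm{\nabla_t}\le G$ instead of any independence argument.
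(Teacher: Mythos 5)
Your proposal is correct and follows essentially the same route as the paper's proof: smoothness, comparison of $v_t$ with the exact-gradient LMO solution, Cauchy--Schwarz with the diameter, the conditional Jensen step $\E[\norm{\nabla_t}\mid\theta_{0:t}]\ge\norm{\nabla L(\theta_t)}$ to produce the product criterion, the pointwise bound $\norm{\nabla_t}\le G$ for the error and quadratic terms, and the same telescoping and parameter substitution. The only detail you omit is the one-line check that $\eta_t=\eta\norm{\nabla_t}\le 1$ (so the update is a valid convex combination), which follows from $\norm{\nabla_t}\le G$ and $\beta\ge 2h(\theta_0)/(MD^2)$.
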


\begin{proof}
    First of all notice that $\eta_t$ is well defined: Using $\beta$ as defined above we have 
	\begin{equation}
		\eta \leq \left( \frac{1}{2TG^2} \right)^{1/2} = \frac{1}{G}\frac{1}{\sqrt{2T}}
	\end{equation}
    and consequently we obtain $\eta_t = \norm{\nabla_t}\eta\leq \frac{1}{\sqrt{2T}} \leq 1$ by using that $\norm{\nabla_t} \leq G$.
	By $M$-smoothness of $L$ we have
	\begin{align*}
		L(\theta_{t+1}) & \leq L(\theta_t)  + \langle \nabla L(\theta_t) ,  \theta_{t+1} - \theta_t \rangle + \frac{M}{2} \| \theta_{t+1} - \theta_t \|^2.
	\end{align*}
	Using the fact that $\theta_{t+1} = \theta_{t} + \eta_t (v_t - \theta_t)$ and that $\| v_t - \theta_t \| \leq D$, it follows that 
	\begin{equation} \label{eq:smoothness-consequence}
		L(\theta_{t+1}) \leq L(\theta_t)  + \eta_t \langle \nabla L(\theta_t) ,   v_t - \theta_t \rangle + \frac{MD^2\eta_t^2}{2} .
	\end{equation}
	Now let
	\begin{equation}
		\hat{v}_t = \argmin_{v \in \mathcal C} \langle \nabla L (\theta_t), v \rangle = \argmax_{v \in \mathcal C} \langle -\nabla L (\theta_t), v \rangle
	\end{equation}
	be the LMO solution if we knew the exact gradient at iterate $\theta_t$, where $t = 0, \ldots, T-1$. This minimizer is not part of the algorithm but is crucial in the subsequent analysis. Note that we have
	\begin{equation}
		\mathcal G(\theta_t) = \max_{v \in \mathcal C} \langle v-\theta_t, -\nabla L(\theta_t) \rangle = \langle \hat{v}_t - \theta_t, -\nabla L(\theta_t)\rangle.
	\end{equation}
	Continuing from Equation~\eqref{eq:smoothness-consequence}, we therefore have
	\begin{align*} 
		L(\theta_{t+1}) & \leq L(\theta_t)  + \eta_t \langle \tilde{\nabla} L(\theta_t) ,   v_t - \theta_t \rangle  + \eta_t \langle \nabla L(\theta_t) - \tilde{\nabla} L(\theta_t) ,   v_t - \theta_t \rangle + \frac{MD^2\eta_t^2}{2} \\
		& \leq L(\theta_t)  + \eta_t \langle \tilde{\nabla} L(\theta_t) ,   \hat{v}_t - \theta_t \rangle  + \eta_t \langle \nabla L(\theta_t) - \tilde{\nabla} L(\theta_t) ,   v_t - \theta_t \rangle + \frac{MD^2\eta_t^2}{2} \\
		& = L(\theta_t)  + \eta_t \langle \nabla L(\theta_t) ,   \hat{v}_t - \theta_t \rangle  + \eta_t \langle \nabla L(\theta_t) - \tilde{\nabla} L(\theta_t) ,   v_t - \hat{v}_t \rangle + \frac{MD^2\eta_t^2}{2} \\
		& = L(\theta_t)  - \eta_t \, \mathcal G(\theta_t)  + \eta_t \langle \nabla L(\theta_t) - \tilde{\nabla} L(\theta_t) ,   v_t - \hat{v}_t \rangle + \frac{MD^2\eta_t^2}{2},
	\end{align*}
	where the first inequality is just a reformulation of Equation~\eqref{eq:smoothness-consequence} and the second one is due to the minimality of $v_t$. Applying Cauchy--Schwarz and using the fact that the diameter of $\mathcal C$ is $D$, we therefore have
	\begin{equation} 
		L(\theta_{t+1}) \leq L(\theta_t)  - \eta_t \, \mathcal G(\theta_t)  + \eta_t D \| \nabla L(\theta_t) - \tilde{\nabla} L(\theta_t)\| + \frac{MD^2\eta_t^2}{2}.
	\end{equation}
	Now note that $\eta_t = \norm{\nabla_t}\eta \leq G\eta$, yielding 
	\begin{equation} 
		L(\theta_{t+1}) \leq L(\theta_t)  - \eta_t \, \mathcal G(\theta_t)  + \eta GD \| \nabla L(\theta_t) - \tilde{\nabla} L(\theta_t)\| + \frac{MD^2G^2{\eta}^2}{2}.
	\end{equation}
	Let $\theta_{0:t}$ denote the sequence $\theta_0, \ldots, \theta_t$.
	Taking expectations and applying Lemma~\ref{lemma:gradient-approximation}, we get
	\begin{equation} 
		\E_{\theta_{0:t+1}} L(\theta_{t+1}) \leq \E_{\theta_{0:t+1}} L(\theta_t)  - \, \E_{\theta_{0:t+1}}\lbrack\eta_t \mathcal G(\theta_t)\rbrack  + \frac{DG^2 \eta}{b^{1/2}} + \frac{MD^2G^2\eta^2}{2}.
	\end{equation}
	By rearranging and summing over $t = 0, \ldots, T-1$, we get the upper bound
	\begin{align} \label{eq:bound}
		\sum_{t=0}^{T-1} \E_{\theta_{0:t+1}} \lbrack\eta_t \mathcal G(\theta_t)\rbrack & \leq L(\theta_0) -\E_{\theta_{0:T}} L(\theta_T) + \frac{TDG^2\eta}{b^{1/2}} + \frac{TMD^2G^2\eta^2}{2} \nonumber \\
		& \leq L(\theta_0) - L(\theta^\star) + \frac{TDG^2\eta}{b^{1/2}} + \frac{TMD^2G^2\eta^2}{2}.
	\end{align}
	Now fix $t$ and apply the law of total expectation to reformulate 
	\begin{equation} 
		\E_{\theta_{0:t+1}} \lbrack\eta_t \mathcal G(\theta_t)\rbrack = \E_{\theta_{0:t}} \E_{\theta_{0:t+1}}\lbrack\eta_t \mathcal G(\theta_t)\ |\ \theta_{0:t}\rbrack = \E_{\theta_{0:t}} \lbrack\mathcal G(\theta_t)\eta \cdot \E_{\theta_{0:t+1}}\lbrack\norm{\nabla_t}\ |\ \theta_{0:t}\rbrack\rbrack,
	\end{equation}
	where we exploited that once $\theta_{0:t}$ is available, $\mathcal G(\theta_t)$ is not subject to randomness anymore. The expected norm of the gradient estimator given $\theta_t$ depends only on the uniform selection of samples, allowing us to exploit the unbiasedness of the estimator as well as the convexity of the norm $\norm{\cdot}$ using Jensen's inequality as follows:
	\begin{align} 
		\E_{\theta_{0:t+1}}\lbrack\norm{\nabla_t}\ |\ \theta_{0:t}\rbrack &= \E_{\omega}\lbrack\norm{\nabla_t}\ |\ \theta_{0:t}\rbrack \\& \geq \norm{\E_{\omega}\lbrack\nabla_t\ |\ \theta_{0:t}\rbrack}\\
		& = \norm{\frac{1}{b} \sum_{j=1}^{b} \E_{\omega_j}\nabla \ell(\theta_t, \omega_j)}\\
		& = \norm{\nabla L(\theta_t)}.
	\end{align}
	Combining this with Equation~\eqref{eq:bound}, we obtain
	\begin{align}
		\eta\sum_{t=0}^{T-1} \E_{\theta_{0:t}} \lbrack\mathcal G(\theta_t) \cdot \norm{\nabla L(\theta_t)}\rbrack \leq h(\theta_0) + \frac{TDG^2\eta}{b^{1/2}} + \frac{TMD^2G^2\eta^2}{2}.
	\end{align}
	
	Using the definition of $\theta_a$, being a uniformly at random chosen iterate from $\theta_0, \ldots, \theta_{T-1}$, we conclude the proof with the following inequality.
	\begin{align} 
		\E\, \lbrack\mathcal G(\theta_a)\cdot \norm{\nabla L(\theta_a)}\rbrack &\leq \frac{h(\theta_0)}{T\eta} + \frac{DG^2}{b^{1/2}} + \frac{MD^2G^2\eta}{2}\\
		&\leq \frac{D}{\sqrt T} \left( \sqrt{h(\theta_0)MG^2\beta} + G^2 + \frac{MGD}{2\sqrt{2}}\right)
	\end{align}
\end{proof}


\end{document}